\renewenvironment{proof} {\textsc{Proof}\quad} {\hfill $\Box$\\}
\newcommand{\BP}{\ensuremath{\mathbf{P}}}
\newcommand{\Act}{\ensuremath{\mathbf{\Sigma}}}
\newcommand{\M}{\mathcal{M}}
\newcommand{\rel}[1]{\stackrel{#1}{\rightarrow}}
\newcommand{\lrel}[1]{\stackrel{#1}{\longrightarrow}}
\newcommand{\lr}[1]{\langle #1 \rangle}
\newcommand{\CONJ}{\ensuremath{\mathtt{UCONJ}}}
\newcommand{\AxTransKU}{\ensuremath{\mathtt{4KU}}}
\newcommand{\AxEucKU}{\ensuremath{\mathtt{5KU}}}
\newcommand{\TRI}{\ensuremath{\mathtt{TRI}}}
\newcommand{\AxTrU}{\ensuremath{\mathtt{TU}}}
\newcommand{\AxTransU}{\ensuremath{\mathtt{4U}}}
\newcommand{\AxEucU}{\ensuremath{\mathtt{5U}}}
\newcommand{\Kv}{\ensuremath{\textit{Kv}}}
\newcommand{\E}{{\mathcal E}}
\newcommand{\DISTU}{\ensuremath{\mathtt{DISTU}}}
\newcommand{\TAUT}{\ensuremath{\mathtt{TAUT}}}
\newcommand{\NECU}{\ensuremath{\mathtt{NECU}}}
\newcommand{\COMPKh}{\ensuremath{\mathtt{COMPKh}}}
\newcommand{\WSKh}{\ensuremath{\mathtt{WSKh}}}
\newcommand{\MP}{\ensuremath{\mathtt{MP}}}
\newcommand{\SUB}{\ensuremath{\mathtt{SUB}}}
\newcommand{\EMP}{\ensuremath{\mathtt{EMP}}}
\newcommand{\COND}{\ensuremath{\mathtt{COND}}}
\newcommand{\NECKh}{\ensuremath{\mathtt{NECKh}}}
\newcommand{\PREKh}{\ensuremath{\mathtt{PREKh}}}
\newcommand{\POSTKh}{\ensuremath{\mathtt{POSTKh}}}
\newcommand{\Kh}{\mathcal{K}h}
\newcommand{\SKh}{\mathbb{SKH}}
\newcommand{\K}{\mathcal{K}}
\renewcommand{\S}{\mathcal{S}}
\renewcommand{\E}{\mathcal{E}}
\newcommand{\V}{\mathcal{V}}
\newcommand{\U}{\mathcal{U}}
\newcommand{\Khp}{\mathcal{K}h^+}
\newcommand{\R}{\mathcal{R}}
\newcommand{\LKh}{\mathbf{L_{Kh}}}
\newcommand{\lra}{\leftrightarrow}
\renewcommand{\phi}{\varphi}
\DeclareSymbolFont{symbolsC}{U}{txsyc}{m}{n}
\DeclareMathSymbol{\strictif}{\mathrel}{symbolsC}{74}
\title{A Logic of Knowing How}
 \author{Yanjing Wang \thanks{The author thanks Frank Veltman for his insightful comments on an earlier version of this paper. The author is also gratful to the support from NSSF key projects 12\&ZD119 and 15AZX020.} \\ \email{y.wang@pku.edu.cn}}
\institute{Department of Philosophy, Peking University}
\begin{document}

\maketitle

\begin{abstract} 
In this paper, we propose a single-agent modal logic framework for reasoning about goal-direct ``knowing how'' based on ideas from linguistics, philosophy, modal logic and automated planning. We first define a modal language to express ``I know how to guarantee $\phi$ given $\psi$'' with a semantics  not based on standard epistemic models but labelled transition systems that represent the agent's knowledge of his own abilities. A sound and complete proof system is given to capture the valid reasoning patterns about ``knowing how'' where the most important axiom suggests its compositional nature.

\end{abstract}

\section[Intro]{Introduction\footnote{To impatient technical readers: this rather philosophical introduction will help you to know \textit{how} the formalism works in the later sections. A bit of philosophy can lead us further. }}
\subsection{Background: beyond ``knowing that''. }
Von Wright and Hinttika laid out the syntactic and semantic foundations of epistemic logic respectively in their seminal works \cite{Wright51} and \cite{Hintikka:kab}. The standard picture of epistemic logic usually consists of: a modal language which can express ``an agent knows that $\phi$''; a Kripke semantics incarnates the slogan ``knowledge (information) as elimination of uncertainty''; a proof system syntactically characterizes a normal modal logic somewhere between $\mathbb{S}4$ and $\mathbb{S}5$ subjective to different opinions about the so-called introspection axioms. Despite the suspicions from philosophers in its early days, the past half-century has witnessed the blossom of this logical investigation of propositional knowledge with applications in  epistemology, theoretical computer science, artificial intelligence, economics, and many other disciplines  besides its birth place of modal logic.\footnote{For an excellent survey of the early history of epistemic logic, see \cite[Chapter 2]{WangRJ11}. For a contemporary comprehensive introduction to its various topics, see \cite{HBEL}.}

However, the large body of research on epistemic logic mainly focuses on propositional knowledge expressed by ``knowing that $\phi$'', despite the fact that in everyday life knowledge is expressed by also ``knowing how'', ``knowing why'', ``knowing what'', ``knowing whether'', and so on (knowing?X below for brevity).  Linguistically, these expressions of knowledge share the common form consisting of the verb ``know'' followed by some embedded questions.\footnote{There is  a cross-lingual fact: such knowing?X sentences become meaningless if the verb ``know'' is replaced by ``believe'', e.g., I believe how to swim. This may shed some shadow on philosophers' usual conception of knowledge in terms of strengthened belief. Linguistically, this phenomenon occurs to many other verbs which can be roughly categorized using factivity, cf., e.g, \cite{Egre08}.} It is natural to assign a high-level uniform truth condition for these knowledge expressions in terms of knowing an answer of the corresponding question \cite{Harrah02}. In fact, in the early days of epistemic logic, Hinttika has elaborate discussions on knowing?X and its relation with questions in terms of first-order modal logic \cite{Hintikka:kab}, which also shapes his later work on \textit{Socratic Epistemology} \cite{Hintikka07}. For example, ``knowing who Frank is'' is rendered as $\exists x \K (\textit{Frank}=x)$ in \cite{Hintikka:kab}. However, partly because of the then-infamous philosophical and technical issues regarding the foundation of first-order modal logic (largely due to Quine), the development of epistemic logics beyond ``knowing that'' was hindered.\footnote{Nevertheless Hintikka addressed some of those issues about first-order modal logic insightfully in the context of epistemic logic, see, e.g., a wonderful survey paper \cite{Hintikka89}. Many of those issues are also elegantly addressed in intensional first-order modal logic cf. e.g., \cite{Fitting98}.} In the seminal work \cite{RAK}, the first-order epistemic logic is just briefly touched without specific discussion of those expressions using different embedded questions. A promising recent approach is based on \textit{inquisitive semantics} where propositions may have both informative content and inquisitive content (cf. e.g.,\cite{CiardelliGR13}). An inquisitive epistemic logic which can handle ``knowing that'' and ``knowing whether'' is proposed in \cite{CiardelliR15}.  

Departing from the linguistically motivated compositional analysis on knowing?X, some researchers took a knowing?X construction as a whole, and introduce a new modality instead of breaking it down by allowing quantifiers, equalities and other logical constants to occur freely in the language \cite{Plaza89:lopc,Hart:1996,wiebeetal:2003}.  For example, ``knowing what a password is'' is rendered by ``\Kv\  \textit{password}'' in \cite{Plaza89:lopc} instead of $\exists x \K\  \textit{password}=x$, where $\Kv$ is the new modality. This move seems promising since by restricting the language we may avoid some philosophical issues of first-order modal logic, retain the decidability of the logic,  and focus on special logical  properties of each particular knowing-?X construction at a high abstraction level. A recent line of work results from this idea \cite{WangF13,WangF14,FanWD14,FWvD15,Xiong14}. Besides the evident non-nomality of the resulting logics,\footnote{For example, \textit{knowing whether} $p\to q$ and knowing whether $p$ together does not entail knowing whether $q$. Likewise, \textit{knowing how} to $p$ and knowing how to $q$ does not entail knowing how to $p\land q$. Moreover, you may not \textit{know why} a tautology is a tautology which contradicts necessitation.} a `signature' technical  difficulty in such an approach is the apparent mismatch of syntax and semantics: the modal language is relatively weak compared to the models which contain enough information to facilitate a reasonable semantics of knowing?X, and this requires new techniques.

\subsection{Knowing how}
Among all the knowing?X expressions, the most discussed one in philosophy and AI is ``knowing how''. Indeed, it sounds the most distant from propositional knowledge (knowledge-that): knowing how to swim seems distinctly different from knowing that it is raining outside. One question that keeps philosophers busy is whether knowledge-how (the knowledge expressed by ``knowing how'') is reducible to knowledge-that. Here philosophers split into two groups: the intellectualists who think knowledge-how is a subspecies of knowledge-that (e.g., \cite{stanley2001knowing}), and the anti-intellectuallists who do not think so  (e.g., \cite{Ryle}). The anti-intellectualism may win your heart at the first glance by equating knowledge-how to certain ability, but the linguistically and logically well-versed intellectualists may have their comebacks at times (think about the previously mentioned interpretation of knowing?X as knowing an answer).\footnote{See \cite{Fantl08} for a survey of the debate. A comprehensive collection of the related papers ($200^+$) can be found at \url{http://philpapers.org/browse/knowledge-how}, edited by John Bengson.} In AI, starting from the early days \cite{Mccarthy69,McCarthy79,Moore85}, people have been studying about representation and reasoning of \textit{procedural knowledge} which is often treated as synonym for knowledge-how in AI, in particular about knowledge-how based on specifiable procedures such as coming out of a maze or winning a game. However, there is no common consensus on how to capture the logic of  ``knowing how'' formally (cf. the excellent surveys \cite{Gochet13,KandA15}). In this paper we presents an attempt to formalize an important kind of ``knowing how'' and lay out its logic foundation, inspired by the aforementioned perspectives of linguistics, philosophy, and AI. 
\medskip

Some  clarifications have to be made before mentioning our ideas and their sources:
\begin{itemize}
\item We will focus on the logic of \textit{goal-direct} ``knowing how'' as Gochet puts it \cite{Gochet13}, such as knowing how to prove a theorem,  how to open the door, how to bake a cake, and how to cure the disease, i.e., linguistically, mainly about knowing how followed by a \textit{achievement verb} or an \textit{accomplishment verb} according to the classification of Vendler \cite{Vendler67}.\footnote{Here knowing how to maintain something or to do an activity (like swimming) are \textit{not} typical examples for our treatment, although we hope our formalism captures some common features shared also by them. As discussed in \cite{Gochet13}, ``knowing how'' plus activities, though more philosophically interesting, is less demanding in logic rendering than others. } On the other hand, we will not talk about the following ``knowing how'': I know how the computer works (explanation), I know how happy she is (degree of emotion), I know how to speak English (rule-direct) and so on. 
\item The goal of this paper is \textit{not} to address the philosophical debate between intellectualism and anti-intellectualism which we did discuss in \cite{LauWang,Lauthesis}. However, to some extent, we are inspired by the ideas from both stands, and combine them in the formal work which may in turn shed new light on this philosophical issue. \footnote{Our logic is more about knowing how than knowledge-how though they are clearly related and often discussed interchangeably in the philosophy literature. The full nature of knowledge-how may not be revealed by the inference structure of the linguistic construction of knowing how.}
\item We focus on the single-agent case without probability, as the first step.
\end{itemize}

\subsection{Basic ideas behind the syntax and semantics}

Different from the cases on ``knowing whether'' and ``knowing what'', there is nothing close to a consensus on what would be the syntax and semantics of the logic of ``knowing how''.  Various attempts were made using Situation Calculus, ATL, or STIT logic to express different versions of ``knowing how'', cf. e.g.,  \cite{Moore85,Morgenstern86,HerzigT06,Broersen08,JamrogaH04,Gochet13}. However, as we mentioned before, we do not favour a compositional analysis using powerful logical languages. Instead, we would like to take the ``knowing how'' construction as a single (and only) modality in our language. It seems natural to introduce a modality $\Kh \phi$ to express the goal-direct ``knowing how to achieve the goal $\phi$''. It sounds similar to ``having the ability to achieve the goal $\phi$'', as many anti-intellectualists would agree. It seems harmless to go one step further as in the AI literature to interpret this type of ``knowing how'' as that the agent \textit{can} achieve  $\phi$. However, it is crucial to note the following problems of such an anti-intelectualistic ability account:  
\begin{enumerate}
\item Knowing how to achieve a goal may not entail that you \textit{can} realize the goal now. For example, as intellectualists would remark, a broken-arm pianist may still know how to play piano even if he cannot play right now, and a chef may still know how to make cakes even when the sugar is run out (cf. e.g., \cite{stanley2001knowing}).
\item Even when you have the ability to win a lottery by luckily buying the right ticket (and indeed win it in the end), it does not mean you know how to win the lottery, since you cannot \textit{guarantee} the result (cf. e.g., \cite{Carr79}). 
\end{enumerate}
To reconcile our intuition about the ability involved in ``knowing how'' and the first problem above, it is observed in \cite{LauWang} that ``knowing how'' expressions in context often come with implicit preconditions.\footnote{Such  conditions are rarely discussed in the philosophical literature of ``knowing how'' with few exceptions such as  \cite{noe2005against}.} For example, when you claim that you know how to go to the city center of Taipei  from the airport, you are talking about what you can do under some implicit preconditions: e.g., the public transportation is still running or there is no strike of the taxi drivers. Likewise, it sounds all right to say that you know how to bake a cake even when you do not have all the ingredients right now: you can do it \textit{given} you have all the ingredients. In our logical language, we make such context-dependent preconditions explicit by introducing the modality $\Kh(\psi, \phi)$ expressing that the agent knows how to achieve $\phi$ given the precondition $\psi$.\footnote{By using the condition, one can be said to know better how to swim than another if he can do it in a more hostile environment (thus weakening the condition) \cite{LauWang}} Actually, we used a similar conditional  knowing what operator in \cite{WangF14} to capture the conditional knowledge such as ``I would know what my password for this website is, given it is 4-digit'' (since I only have one 4-digit password ever).\footnote{Such conditionals are clearly not simple (material) implications and they are closely related to conditional probability and conditional belief (cf. e.g., \cite{TillioHS14}).} In \cite{WangF14}, this conditionaliztion proved to be also useful to encode the potential dynamics of knowledge. We will come back to this at the end of the paper.

Now, to reconcile the intuition of ability with the second problem above, we need to interpret the ability more precisely to exclude the lucky draws. Our main idea comes from \textit{conformant planning} in AI which is exactly about \textit{how} to achieve a goal by a linear plan which can never fail given some initial uncertainty (cf. e.g., \cite{SW98}). For example (taken from \cite{WL12,YLW15}), consider the following map of a floor, and suppose that you know you are at a place marked by $p$ but do not know exactly where you are. Do you know how to reach a safe place (marked by $q$)?
\begin{example}\label{ex.spy}
$$\xymatrix{
&s_6&{{s_7:q}}&{{s_8: q}} &\\
s_1\ar[r]|r& s_2:p\ar[r]|r\ar[u]|u& s_3:p\ar[r]|r\ar[u]|u&{s_4:q}\ar[r]|r\ar[u]|u&s_5
\save "2,2"."2,3"!C="g1"*+[F--:<+20pt>]\frm{}
\restore
}$$
\end{example}
It is not hard to see that there exists a plan to \textit{guarantee} your safety  from any place marked by $p$, which is to move $r$ first then move $u$. On the contrary, the plan $r r$ and the plan $u$ may fail sometimes depending on where you are actually. The locations in the map can be viewed as states of affairs and the labelled directional edges between the states can encode your own ``knowledge'' of the available actions and their effects.\footnote{The agent may have more abilities \textit{de facto} than what he may realize. It is important to make sure the agent can \textit{knowingly} guarantee the goal in terms of the ability he is aware of, cf.  \cite{Mccarthy69,Broersen08,KandA15}.} Intuitively, to know how to achieve $\phi$ requires that you can guarantee $\phi$. Consider the following examples which represent the agent's knowledge about his own abilities. 
\begin{example}\label{ex.exe}
$$
\xymatrix@R-20pt{
&{s_2}\ar[r]|b&{s_4: q}\\
{s_1:p}\ar[ur]|a\ar[dr]|a\\
&{s_3}
\restore
}
\qquad 
\xymatrix@R-10pt{
{s_1:p}\ar[r]|a&{s_3}\ar[r]|b&{s_5: q}\\
{s_2:p}\ar[r]|b&{s_4}\ar[r]|a&{s_6: q}
\restore
}
$$
\end{example}
The graph on the left denotes that you know you can do $a$ at the $p$-state $s_1$ but you are not sure  what the consequence is: it may lead to either $s_2$ or $s_3$, and the exact outcome is out of your control. Therefore, this action is \textit{non-deterministic} to you. In this case, $ab$ is not a good plan since it may fail to be executable. Thus it sounds unreasonable to claim that you know how to reach $q$ given $p$. Now consider the graph on the right. Let $ab$ and $ba$ be two treatments for the same symptom $p$ depending on the exact cause ($s_1$ or $s_2$). As a doctor, it is indeed true that you can cure the patient (to achieve $q$) if you are told the exact cause. However, responsible as you are, can you say you know how to cure the patient given only the symptom $p$? A wrong treatment may kill the patient. These planning examples suggest the following truth condition for the modal formula $\Kh(\psi, \phi)$ w.r.t.\ a graph-like model  representing the agent's knowledge about his or her abilities (available actions and their possibly non-deterministic effects): 

\begin{quote} There \textit{exists} a sequence $\sigma$ of actions such that from \textit{all} the $\psi$-states in the graph, $\sigma$ will \textit{always} succeed in reaching $\phi$-states. 
\end{quote}
Note that the nesting structure of quantifiers  in the above truth condition is $\exists \forall \forall$.\footnote{In \cite{Brown88},  the author introduced a modality for \textit{can $\phi$} with the following  $\exists\forall$ schema over neighbourhood models: \textit{there is} a relevant cluster of possible worlds (as the outcomes of an action) where $\phi$ is true in \textit{all} of them. } The first $\exists$ fixes a unique sequence, the first $\forall$ checks all the possible states satisfying the condition $\psi$, and the second $\forall$ make sure the goal is guaranteed. 

There are several points to be highlighted: 1. $\exists$ cannot be swapped with the first $\forall$: see the discussion about the second graph in Example \ref{ex.exe}, which amounts to the distinction between \textit{de re} and \textit{de dicto} in the setting of ``knowing how'' (cf. also \cite{Moore85,JamrogaH04,HerzigT06,KandA15} and uniform strategies in imperfect information games); 2. There is no explicit ``knowing that'' in the above truth condition, which differs from the truth conditions advocated by intellectualism \cite{stanley2011know} and the linguistically motivated $\exists x \K \phi(x)$ rendering.\footnote{This also distinguishes this work from our earlier philosophical discussion \cite{LauWang} where intellectualism was defended by giving an $\exists x \K \phi(x)$-like truth condition informally.} On the other hand, the graph model represents the agent's knowledge of his actions and their effects (cf. \cite{Wang15}). 3. The truth condition is based on a Kripke-like model without epistemic relations as in the treatment of (imperfect) procedure information in \cite{Wang15}. As it will become more clear later on, it is not necessary to go for neighbourhood or topological models to accommodate non-normal modal logics if the truth condition of the modality is non-standard (cf. also \cite{KrachtW97,FWvD15,WangF13}); 4. Finally,  our interpretation of ``knowing how'' does not fit the standard scheme ``knowledge as elimination of uncertainty'', and it is not about possible worlds indistinguishable from the ``real world''. The truth of $\Kh(\psi,\phi)$ does not depend on the actual world: it is ``global'' in nature. 

\medskip

In the next section, we will flesh out the above ideas in precise definitions and proofs: first a simple formal language, then the semantics based on the idea of planning, and finally a sound and complete proof system. We hope our formal theory can clarify the above informal ideas further. In the last section, we summarize our novel ideas beyond the standard schema of epistemic logic,  and point out many future directions.

\section{The logic}
\begin{definition}
Given a set of proposition letters $\BP$, the language $\LKh$ is  defined as follows: 
$$\begin{array}{r@{\quad::= \quad}l}
\varphi  &
  \top
         \mid
           p
           \mid \neg \varphi
           \mid (\varphi \land \varphi)
           \mid \Kh (\varphi, \varphi)
\end{array}$$
\noindent where $p\in\BP$. As discussed in the previous section, $\Kh(\psi, \phi)$ expresses that the agent knows how to achieve $\phi$ given $\psi.$ We use the standard abbreviations $\bot, \phi\lor\psi$ and $\phi\to\psi $, and  define $\U\varphi$ as $ \Kh(\neg\varphi,\bot)$. The meaning of $\U$ will become more clear after the semantics is defined.
\end{definition}

\begin{definition} Given the set of proposition letters $\BP$ and a countable non-empty set of action symbols $\Act.$
An \emph{ability map} is essentially a labelled transition system $(\S, \R, \V)$ where:
\begin{itemize}
\item $\S$ is a non-empty set of states;
\item $\R: \Act\to 2^{\S\times \S}$ is a collection of transitions labelled by actions in $\Act$;
\item $\V: S\to 2^\BP$ is a valuation function.
\end{itemize}
We write $s\rel{a}t$ if $(s, t)\in \R(a).$ For a sequence $\sigma=a_1\dots a_n\in\Act^*$, we write $s\rel{\sigma}t$ if there exist $ s_2\dots s_{n}$ such that $s\rel{a_1}s_2\rel{a_2}\cdots \rel{a_{n-1}}s_n\rel{a_n}t$. Note that $\sigma$ can be the empty sequence $\epsilon$ (when $n=0$), and we set $s\rel{\epsilon}s$ for any $s$. Let $\sigma_k$ be the initial segment of $\sigma$ up to $a_k$ for $k\leq |\sigma|$. In particular let $\sigma_0=\epsilon$. We say that $\sigma=a_1\dots a_n$ is \emph{strongly executable} at $s$ if: for any $0\leq k < n$ and any  $t$, $s\rel{\sigma_k}t$ implies that $t$ has at least one $a_{k+1}$-successor.  It is not hard to see that if $\sigma$ is strongly executable at $s$ then it is executable at $s$, i.e., $s\rel{\sigma}t$ for some $t$.
\end{definition}

Note that, according to our above definition,  $ab$ is not strongly executable from $s_1$ in the left-hand-side model of Example~\ref{ex.exe}, since  $s_3$ has no $b$-successor but it can be reached from $s_1$ by $a=(ab)_1$. 

\begin{definition}[Semantics of $\LKh$]
$$\begin{array}{|rcl|}
\hline
\M,s\vDash\top && always\\
\M,s\vDash p &\Leftrightarrow& p\in V(s)\\
\M,s\vDash \neg \varphi &\Leftrightarrow& \M,s\nvDash \varphi\\ 
\M,s\vDash \varphi\land\psi &\Leftrightarrow& \M,s\vDash \varphi \text{ and } \M,s\vDash \psi \\
\M,s\vDash \Kh(\psi,\varphi)&\Leftrightarrow& \text{ there exists a } \sigma\in\Act^* \text{ such that for all $s'$ such that }  \M, s'\vDash \psi: \\
&& \sigma \text{ is strongly executable at $s'$ and for all $t$ such that } s'\rel{\sigma}t, \M, t\vDash \varphi    \\
\hline
\end{array}
$$
\end{definition}

Note that the modality $\Kh$ is \textit{not local} in the sense that its truth  does not depend on the designated state where it is evaluated. Thus it either holds on all the states or none of them.  It is not hard to see that the schema of $\exists\forall\forall$ appears in the truth condition for $\Kh$ where the last $\forall$ actually consists of two parts: the strong executability (there is a $\forall$ in its definition) and the guarantee of the goal. These two together make sure the plan will never fail to achieve $\phi$. It is a simple exercise to see that $\Kh(p, q)$ holds in the model of Example \ref{ex.spy}, but not in the models of Example~\ref{ex.exe}. Moreover, the operator $\U$ defined by $\Kh$ is actually a \textit{universal modality}:\footnote{Note that $\U$ is a very powerful modality in its expressiveness when combined with the standard $\Box$ modality, cf. \cite{GorankoP92}.} 
$$\begin{array}{|rcl|}
\hline

\M,s\vDash \U \varphi&\Leftrightarrow& \text{ for all }t\in \S, \M, t\vDash\varphi   \\
\hline
\end{array}
$$
To see this, check the following:
\begin{small} $$\begin{array}{|rcl|}
\hline
\M,s\vDash \Kh(\neg\psi,\bot)&\Leftrightarrow& \text{ there exists a } \sigma\in\Act^* \text{ such that for every } \M, s'\vDash \neg\psi: \\
&& \sigma \text{ is strongly executable at $s'$ and } \text{ if } s'\rel{\sigma}t \text{ then }\M, t\vDash \bot    \\
&\Leftrightarrow& \text{ there exists a } \sigma\in\Act^* \text{ such that for every } \M, s'\vDash \neg\psi: \\
&& \sigma \text{ is strongly executable at $s'$ and there is no }t \text{ such that } s'\rel{\sigma}t  \\
&\Leftrightarrow& \text{ there exists a } \sigma\in\Act^* \text{ such that for every } \M, s'\vDash \neg\psi: \bot \text{ holds }\\
&\Leftrightarrow& \text{ there exists a } \sigma\in\Act^* \text{ such that there is no }s' \text{ such that } \M, s'\nvDash \psi\\
&\Leftrightarrow&  \text{ for all }t\in \S,  \M, t\vDash \psi\\
\hline
\end{array}
$$
\end{small}
\begin{proposition}\label{prop.sound}
The following are valid: 
\begin{center}
\begin{tabular}{l@{\quad}l@{\qquad }l@{\quad}l}
1& $\U p\land\U (p\to q)\to \U q$ &2 &  $\Kh(p, r)\land\Kh(r, q)\to\Kh(p, q)$\\
3& $\U(p \to q)\to \Kh(p, q)$&4 &$\U p\to p $ \\
5 & $\Kh(p, q)\to\U\Kh(p, q)$&6& $\neg \Kh(p, q)\to\U\neg\Kh(p, q)$
\end{tabular}
\end{center}
%
\end{proposition}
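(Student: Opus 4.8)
The plan is to verify the six items by directly unfolding the truth clause for $\Kh$, using two facts already established above: $\U\varphi$ is the universal modality (so $\M,s\vDash\U\varphi$ iff $\M,t\vDash\varphi$ for every $t\in\S$), and $\Kh(\psi,\varphi)$ is non-local, hence holds either at all states of a model or at none. Granting these, items 1, 4, 5 and 6 are immediate. Item 1, $\U p\land\U (p\to q)\to \U q$, is the $\mathsf{K}$-schema read off the universal quantification over $\S$ together with pointwise propositional reasoning. Item 4, $\U p\to p$, holds because the evaluation state lies among the states $\U$ quantifies over. Items 5 and 6 hold because $\Kh(p,q)$ and $\neg\Kh(p,q)$ are global statements about the model: if either is true at the evaluation state it is true at every state, which is exactly what $\U\Kh(p,q)$, respectively $\U\neg\Kh(p,q)$, asserts.

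For item 3, $\U(p\to q)\to\Kh(p,q)$, I would exhibit the empty sequence $\epsilon$ as the witness required by the existential clause: if every state satisfies $p\to q$, then for each state $s'$ with $\M,s'\vDash p$ the sequence $\epsilon$ is vacuously strongly executable at $s'$, the only $t$ with $s'\rel{\epsilon}t$ is $s'$ itself, and $\M,s'\vDash p$ forces $\M,s'\vDash q$.

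The one case that needs actual work is item 2, the composition axiom $\Kh(p,r)\land\Kh(r,q)\to\Kh(p,q)$, and I expect the gluing of the two plans to be the main obstacle. Suppose $\sigma_1$ witnesses $\Kh(p,r)$ and $\sigma_2$ witnesses $\Kh(r,q)$; the claim is that the concatenation $\sigma_1\sigma_2$ witnesses $\Kh(p,q)$. Fix $s'$ with $\M,s'\vDash p$. For strong executability of $\sigma_1\sigma_2$ at $s'$, I split the relevant prefixes into the proper initial segments of $\sigma_1$, which are covered by strong executability of $\sigma_1$ at $s'$, and those of the form $\sigma_1(\sigma_2)_j$ with $0\le j<|\sigma_2|$: a state $t'$ reached from $s'$ along such a prefix factors as $s'\rel{\sigma_1}t''\rel{(\sigma_2)_j}t'$, where $\M,t''\vDash r$ since $t''$ is a $\sigma_1$-successor of the $p$-state $s'$ under the witness $\sigma_1$, so strong executability of $\sigma_2$ at $t''$ supplies $t'$ with the next successor needed. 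For the goal condition, any $u$ with $s'\rel{\sigma_1\sigma_2}u$ factors as $s'\rel{\sigma_1}t\rel{\sigma_2}u$ with $\M,t\vDash r$, and the witnessing property of $\sigma_2$ at the $r$-state $t$ then gives $\M,u\vDash q$. The delicate points are keeping the prefix indices straight across the junction of $\sigma_1$ and $\sigma_2$ (in particular the boundary case $j=0$, where $t'=t''$), using that strong executability entails executability so that $\sigma_1$-successors of $s'$ actually exist, and noticing that when there are no $p$-states at all the conclusion $\Kh(p,q)$ holds vacuously anyway. Everything outside this gluing argument is a direct reading of the definitions.
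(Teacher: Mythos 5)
Your proposal is correct and follows essentially the same route as the paper: items 1 and 4 from $\U$ being the universal modality, item 3 via the empty plan $\epsilon$, items 5 and 6 from the globality of $\Kh$, and item 2 by concatenating the two witnessing plans, with your prefix-splitting argument simply spelling out in more detail the paper's observation that $\sigma\eta$ is strongly executable from every $p$-state and lands only in $q$-states.
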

\begin{proof}Since $\U$ is simply a universal modality 1 and 4 are obvious. 3 is due to the fact that $\epsilon$ is allowed as a trivial plan. 5 and 6 are due to the fact that $\Kh$ is global.  The only non-trivial case is 2. Note that if there is a strongly executable sequence  $\sigma$ leading you from any $p$-state to some $r$-state, and there is a strongly executable sequence $\eta$ from $r$-states to $q$-states, then $\sigma\eta$ is strongly executable from any $p$-state and it will make sure that you end up with $q$-states from any $p$-state. 
\end{proof}
\indent The validity of (2) above actually captures the intuitive compositionality of ``knowing how'', as desired. Note that $\Kh(p, q)\land \Kh(p, r)\to \Kh(p, q\land r)$ is not valid, as desired.\\

Based on the above axioms, we propose the following proof system $\SKh$ for $\LKh$ (where $\varphi[\psi\slash p]$ is obtained by uniformly substituting $p$ in $\phi$ by $\psi$): 
\begin{center}
\begin{tabular}{lclc}
\multicolumn{4}{c}{System $\SKh$}\\
\multicolumn{2}{l}{\textbf{Axioms}}&\textbf{Rules}&\\
\TAUT & \text{all axioms of propositional logic}&\MP & $\dfrac{\varphi,\varphi\to\psi}{\psi}$\\
\DISTU & $\U p\land\U (p\to q)\to \U q$&\NECU &$\dfrac{\varphi}{\U\varphi}$\\
\COMPKh & $\Kh(p, r)\land\Kh(r, q)\to\Kh(p, q)$&\SUB &$\dfrac{\varphi(p)}{\varphi[\psi\slash p]}$\\
\EMP &$\U(p \to q)\to \Kh(p, q)$  &\phantom{$\dfrac{\varphi}{[a]\varphi} $}\\
\AxTrU& $\U p\to p $ &\phantom{$\dfrac{\varphi(p)}{\varphi[\psi\slash p]}$} \\
 \AxTransKU& $\Kh(p, q)\to\U\Kh(p, q)$&\phantom{$\dfrac{\varphi}{\Box\varphi}$}\\
 \AxEucKU& $\neg \Kh(p, q)\to\U\neg\Kh(p, q)$&\phantom{$\dfrac{\varphi}{\Box\varphi}$}\\
\end{tabular}
\end{center}

Proposition~\ref{prop.sound} plus some reflection on the usual inference rules should establish the soundness of $\SKh$. For completeness, we first get a taste of the deductive power of $\SKh$ by proving the following formulas which play important roles in the later completeness proof. In the rest of the paper we use $\vdash$ to denote $\vdash_{\SKh}.$
\begin{proposition}
We can derive the following in $\SKh$ (names are given to be used later):
\begin{center}
\begin{tabular}{|c|c|}
\hline
\TRI & $\Kh(p, p)$\\
\WSKh & $ \U(p\to r)\land \U(o\to q)\land \Kh(r, o)\to \Kh(p, q)$\\
\AxTransU&$\U p\to\U\U p$\\
\AxEucU&$\neg\U p\to \U\neg\U p$\\
\COND&$\Kh(\bot, p)$\\
\CONJ &$\U(\varphi\land\psi)\lra(\U\varphi\land\U\psi). $\\
\PREKh & $ \Kh (\Kh(p, q)\land p, q)$. \\
\POSTKh & $\Kh(r, \Kh(p, q)\land p)\to \Kh(r, q) $\\
 \hline
\end{tabular}
\end{center}
\noindent Moreover, the following rule \NECKh\ is admissible: $\vdash\phi \implies \vdash\Kh (\psi, \phi)$.
\end{proposition}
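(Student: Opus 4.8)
The plan is to derive each listed formula inside $\SKh$ by purely syntactic manipulation, exploiting the two ``pillars'' of the system: the $\mathbb{S}5$-behaviour of $\U$ (given by $\DISTU$, $\AxTrU$, $\AxTransKU$ read for $\bot$-goals, $\AxEucKU$, $\NECU$, and $\MP$) together with $\SUB$ so that $\U$ acts like a normal universal modality, and the two genuinely ``know-how'' principles $\EMP$ and $\COMPKh$. The overall strategy for almost every item is: reduce a $\Kh$-claim to an $\EMP$-instance by producing a suitable $\U$-implication, then glue pieces together with $\COMPKh$ (transitivity); and reduce a $\U$-claim to ordinary modal-$\mathbb{S}5$ reasoning. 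I would present these in an order that lets later derivations reuse earlier ones.

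First the easy ones. $\TRI$: $\U(p\to p)$ follows by $\NECU$ from the propositional tautology $p\to p$, then $\EMP$ (with $\SUB$ to match $p,q$) gives $\Kh(p,p)$. $\COND$ ($\Kh(\bot,p)$): similarly from the tautology $\bot\to p$ via $\NECU$ and $\EMP$. $\CONJ$: this is the standard fact that a normal $\Box$ distributes over $\land$ both ways; from $\DISTU$, $\NECU$ and $\SUB$ one derives $\U\varphi\land\U\psi\to\U(\varphi\land\psi)$ and the converse from $\NECU$ applied to the projection tautologies, combined by $\MP$. $\AxTransU$ and $\AxEucU$: these are the usual derivations that a logic containing $\mathtt{T}$, $\mathtt{5}$ (here $\AxEucKU$ specialized, i.e. $\neg\Kh(\neg p,\bot)\to\U\neg\Kh(\neg p,\bot)$, unfolding $\U$) plus $\mathtt{K}$ and necessitation proves $\mathtt{4}$ and that $\mathtt{5}$ for $\U$ itself is derivable; concretely $\AxEucU$ is just $\AxEucKU$ rewritten through the definition $\U\varphi:=\Kh(\neg\varphi,\bot)$, and $\AxTransU$ follows from $\AxTrU$ and $\AxEucU$ by the classical argument ($\neg\U p\to\U\neg\U p$, contrapose and use $\mathtt{T}$). $\WSKh$ ($\U(p\to r)\land\U(o\to q)\land\Kh(r,o)\to\Kh(p,q)$): apply $\EMP$ to get $\Kh(p,r)$ from $\U(p\to r)$ and $\Kh(o,q)$ from $\U(o\to q)$, then chain $\Kh(p,r),\Kh(r,o),\Kh(o,q)$ by two uses of $\COMPKh$.

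The more substantive items are $\PREKh$, $\POSTKh$, and the admissibility of $\NECKh$, and I expect these to be where the real work sits. For $\PREKh$ ($\Kh(\Kh(p,q)\land p,\,q)$): the intended semantic reason is that the antecedent $\Kh(p,q)$ is global, so when it holds, the very witnessing plan for $\Kh(p,q)$ works from every $(\Kh(p,q)\land p)$-state; syntactically I would use $\AxTransKU$/$\AxEucKU$ to move the global fact around — roughly, argue $\U\big((\Kh(p,q)\land p)\to p\big)$ is trivial, but that only gives $\Kh(\Kh(p,q)\land p,p)$, which is not enough. The right move is a case split provable in the system: either $\Kh(p,q)$ holds (then by $\AxTransKU$, $\U\Kh(p,q)$, and one wants $\Kh(p,q)\to\Kh(\Kh(p,q)\land p,q)$, which needs a principle letting us strengthen the precondition of a $\Kh$ by a globally-true conjunct — this is exactly provided by $\WSKh$ with $r:=p$ after noting $\U\big((\Kh(p,q)\land p)\to p\big)$), or $\neg\Kh(p,q)$ holds (then by $\AxEucKU$, $\U\neg\Kh(p,q)$, so $\U\big((\Kh(p,q)\land p)\to\bot\big)$ by $\CONJ$-style reasoning, hence $\Kh(\Kh(p,q)\land p,\bot)$ by $\EMP$, and then $\COMPKh$ with $\COND$ gives the goal). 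Combining the two cases by propositional logic yields $\PREKh$. For $\POSTKh$ ($\Kh(r,\Kh(p,q)\land p)\to\Kh(r,q)$): use $\PREKh$ to get $\Kh(\Kh(p,q)\land p,q)$ and then $\COMPKh$ with the hypothesis $\Kh(r,\Kh(p,q)\land p)$. Finally, $\NECKh$ admissible: if $\vdash\varphi$ then by $\NECU$ $\vdash\U\varphi$, whence $\vdash\U(\top\to\varphi)$ by propositional reasoning and $\vdash\Kh(\psi,\top)\to\Kh(\psi,\varphi)$; but more directly, $\vdash\U\varphi$ gives $\vdash\U(\psi\to\varphi)$ (since $\varphi$ is a theorem, $\psi\to\varphi$ is too, so use $\NECU$ on $\psi\to\varphi$ directly), and then $\EMP$ with $\SUB$ yields $\vdash\Kh(\psi,\varphi)$.

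The main obstacle I anticipate is $\PREKh$: getting the interaction between the global modality $\U$ and $\Kh$ exactly right so that a globally-true formula can be moved in and out of the precondition slot of $\Kh$. The delicate point is that $\Kh$ is not normal and there is no general ``strengthen the precondition'' axiom; the only licensed strengthening is through $\EMP$ (a valid implication, hence via $\WSKh$) or through $\COMPKh$. So the proof must be organized as the $\Kh(p,q)$-versus-$\neg\Kh(p,q)$ case analysis above, using $\AxTransKU$/$\AxEucKU$ to make the case-distinguishing formula itself globally available, and in each branch reducing to an $\EMP$/$\COMPKh$/$\COND$ combination. Once $\PREKh$ is in hand, $\POSTKh$ and $\NECKh$ are short, and the remaining formulas are routine normal-modal-logic bookkeeping for $\U$ plus one or two applications of $\EMP$ and $\COMPKh$.
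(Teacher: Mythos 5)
Your proposal is correct and follows essentially the same route as the paper: each item is reduced to $\EMP$-instances via $\NECU$ on tautologies and glued with $\COMPKh$, $\WSKh$ is two compositions, and $\PREKh$ is exactly the paper's case split on $\Kh(p,q)$ vs.\ $\neg\Kh(p,q)$ (using $\WSKh$ in one branch and $\AxEucKU$, $\EMP$, $\COND$, $\COMPKh$ in the other), with $\POSTKh$ and $\NECKh$ following as you say. The only cosmetic divergence is $\AxTransU$, which the paper obtains directly as a substitution instance of $\AxTransKU$ (with $p:=\neg p$, $q:=\bot$) rather than via the classical $\mathtt{T}{+}\mathtt{5}\vdash\mathtt{4}$ argument; both work.
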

\begin{proof}
\TRI\ is proved by applying  \NECU\ to the tautology $p\to p$, followed by $\EMP$. 
\WSKh\ says if you weaken the goal and strengthen the condition you still know how. It is proved by applying $\EMP$ to $\U(p\to r)$ and $\U(o\to q)$ followed by applying $\COMPKh$ twice. \AxTransU\ and \AxEucU\ are special cases of \AxTransKU\ and \AxEucKU\ respectively since $\U\psi:=\Kh(\neg\psi, \bot).$ Since $\bot\to p$ is a tautology, we can apply $\NECU$ and \EMP\ to obtain $\COND$. \CONJ\ is a  standard exercise for a normal modality. Interestingly, \PREKh\ says that you know how to guarantee $q$ given both $p$ and the fact that you know how to guarantee $q$ given $p$.   It can be proved by distinguishing two cases: $\Kh(p, q)$ and $\neg \Kh(p, q)$, and use \COND\ and \WSKh\ respectively under the help of \NECU. $\POSTKh$ can be proved easily based on \COMPKh\ and \PREKh. It says that you know how to achieve $q$ given $r$ if you know how to achieve a state where you know how to continue to achieve $q$.\footnote{This is an analog of a requirement of the modality \texttt{Can} in \cite{Moore85}.} Finally \NECKh\ is the necessitation rule for $\Kh$ which can be derived by starting with the tautology $\psi\to \phi$ (given $\vdash\phi$) followed by the applications of \NECU\ and \EMP.
%
\end{proof}

\begin{remark}\label{rem.kt}From the above proposition and the system $\SKh$, we see that $\U$ is indeed an $\mathbb{S}5$ modality which can be considered as a version of ``knowing that'': you know that $\phi$ iff it holds on all the relevant possible states under the current restriction of attention (not just the epistemic alternatives to the actual one). The difference is that here the knowledge-that expressed by $\U\phi$ refers to the ``background knowledge'' that you take for granted for now, rather than contingent but epistemically true facts in the standard epistemic logic. Another interesting thing to notice is that $\WSKh$ actually captures an important connection  between ``knowing that'' and ``knowing how'', e.g., you know how to cure a disease if you know that it is of a certain type and you know how to cure this type of the disease in general. We will come back to the relation between ``knowing how'' and ``knowing that'' at the end of the paper.
\end{remark}

It is crucial to establish the following replacement rule to ease the later proofs. 
\begin{proposition}
The replacement of equivalents ($\vdash\phi\lra\psi \implies \vdash\chi\lra \chi [\psi\slash\phi]$)\footnote{Here the substitution can apply to  some (not necessarily all) of the occurrences. } is an admissible rule in $\SKh$. 
\end{proposition}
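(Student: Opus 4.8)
The plan is to prove, by induction on the structure of $\chi$, a statement slightly stronger than the one displayed: if $\vdash\phi\lra\psi$, then for \emph{every} formula $\chi'$ obtained from $\chi$ by replacing some (possibly none, possibly not all) occurrences of $\phi$ by $\psi$, we have $\vdash\chi\lra\chi'$. Specializing $\chi'$ to the result of the intended replacement yields the proposition, and the footnoted remark that only some occurrences need be replaced is accommodated automatically.

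For the base and Boolean cases: if no occurrence of $\phi$ inside $\chi$ is actually replaced then $\chi'=\chi$ and $\vdash\chi\lra\chi$ by \TAUT\ and \MP; if $\chi$ \emph{is} $\phi$ and that occurrence is replaced then $\chi'=\psi$ and we are done by hypothesis. Otherwise $\chi$ is $\neg\chi_1$, or $\chi_1\land\chi_2$, or $\Kh(\chi_1,\chi_2)$, and the replaced occurrences are distributed among the immediate subformulas. The induction hypothesis gives $\vdash\chi_i\lra\chi_i'$ for the relevant subformulas, and for $\neg$ and $\land$ the desired equivalence follows by pure propositional reasoning (\TAUT, \MP).

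The only genuine work is the case $\chi=\Kh(\chi_1,\chi_2)$, where $\Kh$ is neither monotone nor normal, so the usual ``apply the $K$-axiom and necessitate'' argument is unavailable. Here I use the derived theorem \WSKh. From $\vdash\chi_1\lra\chi_1'$ we get $\vdash\chi_1'\to\chi_1$, hence $\vdash\U(\chi_1'\to\chi_1)$ by \NECU; from $\vdash\chi_2\lra\chi_2'$ we get $\vdash\U(\chi_2\to\chi_2')$ likewise. Instantiating \WSKh\ (via \SUB, mapping its propositional letters $p,r,o,q$ to $\chi_1',\chi_1,\chi_2,\chi_2'$ respectively) gives $\vdash\U(\chi_1'\to\chi_1)\land\U(\chi_2\to\chi_2')\land\Kh(\chi_1,\chi_2)\to\Kh(\chi_1',\chi_2')$, so \MP\ yields $\vdash\Kh(\chi_1,\chi_2)\to\Kh(\chi_1',\chi_2')$. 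The converse implication is obtained symmetrically, using $\vdash\chi_1\to\chi_1'$ and $\vdash\chi_2'\to\chi_2$; conjoining the two gives $\vdash\Kh(\chi_1,\chi_2)\lra\Kh(\chi_1',\chi_2')$, which closes the induction.

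The main obstacle is precisely this $\Kh$-case: since $\Kh$ fails monotonicity in general and the logic is non-normal, replacement cannot be reduced to a congruence rule for a normal $\Box$. What makes it go through is that \WSKh\ packages exactly the right mixed monotonicity — antitone in the precondition, monotone in the goal — so that two-sided implications between the subformulas lift to a two-sided implication between the corresponding $\Kh$-formulas. Everything else is routine propositional bookkeeping.
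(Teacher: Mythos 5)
Your proof is correct and takes essentially the same route as the paper's: the paper reduces replacement to congruence rules for each argument of $\Kh$ (leaving the surrounding structural induction as a standard exercise) and derives those rules from \NECU, \EMP\ and \COMPKh\ --- exactly the ingredients packaged in the derived theorem \WSKh\ that you invoke. The only cosmetic difference is that you spell out the induction and handle both argument positions of $\Kh$ simultaneously via one instance of \WSKh\ (plus its symmetric instance), whereas the paper derives the two one-argument congruence rules separately.
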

\begin{proof}
It becomes a standard exercise in modal logic if we can prove that the following two rules are admissible in $\SKh$: $$\vdash\psi\lra\phi \implies\vdash \Kh(\psi, \chi)\lra\Kh(\phi, \chi) , \qquad \vdash\psi\lra\phi \implies\vdash \Kh(\chi, \psi)\lra\Kh(\chi, \phi).$$ Actually we can derive (all the instances of) these two rules as follows (we only show the first one since the second one is similar.): 
$$\begin{array}{lll}
1&\psi\lra\phi& \text{assumed}\\ 
2&\phi\to\psi& 1, \TAUT\\ 
3&\U (\phi\to\psi)& \NECU\\ 
4& \Kh(\phi, \psi)& \MP(\EMP, 3) \\ 
5& \Kh(\psi, \chi)\to\Kh(\phi, \psi)& 4, \TAUT\\
6& \Kh(\psi, \chi)\to (\Kh(\phi, \psi)\land \Kh(\psi, \chi))& 5, \TAUT\\
7& \Kh(\phi, \psi)\land \Kh(\psi, \chi)\to\Kh(\phi, \chi) & \COMPKh, \SUB\\
8& \Kh(\psi, \chi)\to\Kh(\phi, \chi) & \MP (6,7)\\
9& \Kh(\phi, \chi)\to\Kh(\psi, \chi) & \text{symmetric version of 2-8}\\
10& \Kh(\psi, \chi)\lra\Kh(\phi, \chi)&\TAUT
\end{array}$$
\end{proof}
In the rest of the paper we often use the above rule of replacement  implicitly.

\medskip
Here are some notions before we prove the completeness. Given a set of $\LKh$ formulas $\Delta$, let  $\Delta|_{\Kh}$ be the collection of its $\Kh$ formulas:  
$$\Delta|_{\Kh}=\{\chi \mid\chi=\Kh(\psi,\varphi)\in\Delta  \}.$$
Similarly, let $\Delta|_{\neg\Kh}$ be the following collection:  
$$\Delta|_{\neg\Kh}=\{\chi \mid\chi=\neg\Kh(\psi,\varphi)\in\Delta  \}.$$
Now for each maximal consistent set of $\LKh$ formulas we build a canonical model. 
\begin{definition}
Given a maximal consistent set $\Gamma$ w.r.t. $\SKh$, let $\Act_\Gamma=\{ \lr{\psi, \varphi}\mid \Kh(\psi,\varphi)\in \Gamma\}$, 
the canonical model for $\Gamma$ is 
$\M^c_\Gamma=\lr{\S^c_\Gamma, \R^c, \V^c}$ where: 
\begin{itemize}
\item $\S^c_\Gamma=\{\Delta\mid \Delta \text{ is a maximal consistent set w.r.t. $\SKh$ and } \Gamma|_{\Kh}=\Delta|_{\Kh} \}$;
\item $\Delta \lrel{\lr{\psi, \varphi}}_c\Theta $ iff $\Kh(\psi, \varphi)\in \Gamma$,$\psi\in \Delta$, and $\varphi\in\Theta$;
\item $p\in V^c(\Delta)$ iff $p\in \Delta$.
\end{itemize}
Clearly  $\Gamma$ is a state in $\M^c_\Gamma.$ We say that $\Delta\in\S^c_\Gamma$ is a \textit{$\phi$-state} if $\phi\in\Delta.$
\end{definition}

The following two propositions are immediate: 
\begin{proposition}\label{prop.equiv}
For any $\Delta, \Delta'$ in $\S^c_\Gamma$, any $\Kh(\psi, \varphi)\in \LKh$, $\Kh(\psi, \varphi)\in \Delta$ iff $\Kh(\psi, \varphi)\in \Delta'$ iff $\Kh(\psi, \varphi)\in\Gamma. $ 
\end{proposition}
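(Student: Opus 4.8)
The plan is to unfold the definition of $\S^c_\Gamma$ directly; this proposition is genuinely immediate, so the "proof" is just a careful chaining of three trivial observations. First I would note that for any $\LKh$-formula of the form $\Kh(\psi,\varphi)$ and any set $\Delta$, membership $\Kh(\psi,\varphi)\in\Delta$ is equivalent to $\Kh(\psi,\varphi)\in\Delta|_{\Kh}$, simply because $\Kh(\psi,\varphi)$ is syntactically a $\Kh$-formula and $\Delta|_{\Kh}$ collects exactly the $\Kh$-formulas of $\Delta$.

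Next I would invoke the defining clause of $\S^c_\Gamma$: every $\Delta\in\S^c_\Gamma$ satisfies $\Gamma|_{\Kh}=\Delta|_{\Kh}$. Applying this to both $\Delta$ and $\Delta'$ yields $\Delta|_{\Kh}=\Gamma|_{\Kh}=\Delta'|_{\Kh}$. Combining with the previous observation gives the chain
$$\Kh(\psi,\varphi)\in\Delta \iff \Kh(\psi,\varphi)\in\Delta|_{\Kh} \iff \Kh(\psi,\varphi)\in\Gamma|_{\Kh} \iff \Kh(\psi,\varphi)\in\Gamma,$$
and symmetrically with $\Delta'$ in place of $\Delta$, which is exactly the claimed triple equivalence.

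There is essentially no obstacle here: the statement does not even use maximal consistency or any axiom of $\SKh$ — only the syntactic constraint built into the definition of the canonical state space. The one point worth stating explicitly (to keep the later truth lemma honest) is the conceptual reading: this proposition is the syntactic counterpart, inside $\M^c_\Gamma$, of the semantic fact that $\Kh$ is \emph{not local} — its truth value is constant over the whole model — and the legitimacy of imposing $\Gamma|_{\Kh}=\Delta|_{\Kh}$ in the construction is what the introspection-style axioms \AxTransKU\ and \AxEucKU\ are there to justify. Since $\Gamma$ itself trivially lies in $\S^c_\Gamma$, no nonemptiness worry arises. I would therefore present this as a two-line argument and move on to the propositions where the real work begins.
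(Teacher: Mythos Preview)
Your proposal is correct and matches the paper's treatment exactly: the paper simply declares this proposition ``immediate'' from the definition of $\S^c_\Gamma$ and gives no further argument. Your unfolding of $\Delta|_{\Kh}=\Gamma|_{\Kh}=\Delta'|_{\Kh}$ is precisely the intended one-line justification.
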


\begin{proposition}\label{prop.reachall}
If $\Delta\lrel{\lr{\psi, \varphi}}\Theta$ for some $\Delta,\Theta\in \S^c_\Gamma$ then $\Delta\rel{\lr{\psi, \phi}}\Theta'$ for any $\Theta'$ such that $\phi\in\Theta'.$ 
\end{proposition}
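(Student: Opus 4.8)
The two propositions are stated as ``immediate,'' so the plan is just to unwind the definitions of the canonical model. For Proposition~\ref{prop.equiv}, I would start from the observation that every $\Delta\in\S^c_\Gamma$ satisfies $\Delta|_{\Kh}=\Gamma|_{\Kh}$ by the very definition of $\S^c_\Gamma$. Since $\Kh(\psi,\varphi)$ is a $\Kh$-formula, membership $\Kh(\psi,\varphi)\in\Delta$ is equivalent to $\Kh(\psi,\varphi)\in\Delta|_{\Kh}=\Gamma|_{\Kh}$, i.e. to $\Kh(\psi,\varphi)\in\Gamma$; and likewise for any other $\Delta'\in\S^c_\Gamma$. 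Chaining these equivalences gives the three-way biconditional. (One might note in passing that this already reflects axioms \AxTransKU\ and \AxEucKU\ semantically, but the proof as set up needs nothing beyond the definition of $\S^c_\Gamma$.)

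For Proposition~\ref{prop.reachall}, suppose $\Delta\lrel{\lr{\psi,\varphi}}_c\Theta$ for some $\Delta,\Theta\in\S^c_\Gamma$. By the definition of $\R^c$ this means exactly three things: $\Kh(\psi,\varphi)\in\Gamma$, $\psi\in\Delta$, and $\varphi\in\Theta$. Now take any maximal consistent set $\Theta'$ with $\varphi\in\Theta'$. To conclude $\Delta\rel{\lr{\psi,\varphi}}_c\Theta'$ I need: (i) $\Theta'\in\S^c_\Gamma$, i.e. $\Theta'|_{\Kh}=\Gamma|_{\Kh}$; (ii) $\Kh(\psi,\varphi)\in\Gamma$, which we already have; (iii) $\psi\in\Delta$, which we already have; and (iv) $\varphi\in\Theta'$, which is the hypothesis on $\Theta'$. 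So the only real content is (i). Here is where I would use the axioms: $\Kh(\psi,\varphi)\in\Gamma$ together with \AxTransKU\ (and maximal consistency, \MP) gives $\U\Kh(\psi,\varphi)\in\Gamma$; since $\U$ is the $\mathbb{S}5$ modality of Remark~\ref{rem.kt} — in particular reflexive via \AxTrU\ and with \AxTransU, \AxEucU\ available — a standard canonical-model argument shows that $\U\chi\in\Gamma$ forces $\chi$ into every point $\mathbb{S}5$-accessible from $\Gamma$, and dually \AxEucKU\ handles $\neg\Kh(\psi,\varphi)\in\Gamma$. The cleanest phrasing, matching how the paper set things up, is: for any $\Kh$-formula $\Kh(\psi',\varphi')$, if it is in $\Gamma$ then $\U\Kh(\psi',\varphi')\in\Gamma$ and hence it is in $\Theta'$; if it is not in $\Gamma$ then (by maximality) $\neg\Kh(\psi',\varphi')\in\Gamma$, so $\U\neg\Kh(\psi',\varphi')\in\Gamma$ by \AxEucKU, hence $\neg\Kh(\psi',\varphi')\in\Theta'$ and $\Kh(\psi',\varphi')\notin\Theta'$. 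This yields $\Theta'|_{\Kh}=\Gamma|_{\Kh}$, so $\Theta'\in\S^c_\Gamma$, completing (i).

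The one subtlety — the closest thing to an obstacle — is justifying ``$\U\chi\in\Gamma$ implies $\chi\in\Theta'$ for arbitrary maximal consistent $\Theta'$,'' since a priori $\Theta'$ is any MCS, not one linked to $\Gamma$ by the canonical $\U$-relation. But this is exactly what the $\mathbb{S}5$ character of $\U$ buys: with \AxTrU, \AxTransU, \AxEucU\ in hand, the canonical $\U$-relation is an equivalence relation, and in fact the standard fact is that for an $\mathbb{S}5$ box, $\{\chi : \U\chi\in\Delta\}$ is the same set for all MCSs $\Delta$ in a single connected component, and moreover $\U\chi\in\Gamma$ iff $\chi$ holds throughout that component. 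Since here we only ever compare $\Gamma$ with points $\Theta'$ that we are trying to place \emph{into} $\S^c_\Gamma$, the argument is self-contained: it is the symmetry/transitivity of $\U$ (encoded by \AxEucKU/\AxTransKU\ restricted to $\Kh$-formulas) that makes the $\Kh$-fragments of $\Gamma$ and of every $\varphi$-containing MCS coincide, and no more than that is needed. I would write both propositions together in a few lines, citing only the definitions of $\S^c_\Gamma$ and $\R^c$, the axioms \AxTransKU, \AxEucKU, and maximal consistency.
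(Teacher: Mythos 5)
Your reading of the quantifier on $\Theta'$ is where this goes wrong. The relation $\R^c$ is a component of the canonical model $\M^c_\Gamma$, whose state set is $\S^c_\Gamma$, so $\Theta'$ ranges over $\S^c_\Gamma$ only; this is also how the proposition is used in the truth lemma, where ``$\varphi$-state'' is by definition an element of $\S^c_\Gamma$ containing $\varphi$. Under that reading the proof is genuinely immediate and uses no axioms at all: from $\Delta\lrel{\lr{\psi,\varphi}}\Theta$ the definition of $\R^c$ gives $\Kh(\psi,\varphi)\in\Gamma$ and $\psi\in\Delta$, and then for any $\Theta'\in\S^c_\Gamma$ with $\varphi\in\Theta'$ the three defining conditions hold again, so $\Delta\rel{\lr{\psi,\varphi}}\Theta'$. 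The whole content is that the presence of a $\lr{\psi,\varphi}$-edge depends only on $\psi$ being in the source and $\varphi$ in the target (given $\Kh(\psi,\varphi)\in\Gamma$), not on which particular target witnessed the original edge. (Your treatment of Proposition~\ref{prop.equiv} is fine.)

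By instead taking $\Theta'$ to be an arbitrary maximal consistent set, you commit yourself to proving your item (i), that every MCS containing $\varphi$ satisfies $\Theta'|_{\Kh}=\Gamma|_{\Kh}$, and this claim is false. The step you yourself flag as the ``one subtlety'' --- from $\U\chi\in\Gamma$ to $\chi\in\Theta'$ --- is exactly the gap: \AxTransKU\ and \AxEucKU\ do put $\U\Kh(\psi',\varphi')$ or $\U\neg\Kh(\psi',\varphi')$ into $\Gamma$, but the $\mathbb{S}5$ axioms only make the canonical $\U$-relation an equivalence relation on the class of all MCSs, not the universal relation; $\U\chi\in\Gamma$ forces $\chi$ only into MCSs in $\Gamma$'s $\U$-class, and the hypothesis $\varphi\in\Theta'$ does nothing to place $\Theta'$ there --- that is precisely what you would need to prove, so the appeal to ``the component of $\Gamma$'' is circular. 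Concretely, $\{q,\neg\Kh(p,q)\}$ is satisfiable (e.g.\ at $s_4$ in the left model of Example~\ref{ex.exe}) and hence extends to an MCS $\Theta'$, while $\Kh(p,q)$ is satisfiable as well (Example~\ref{ex.spy}) and so lies in some $\Gamma$; this $\Theta'$ contains $q$ yet disagrees with $\Gamma$ on a $\Kh$-formula, refuting (i) with $\varphi=q$, and showing that under your reading the proposition itself would be false. The fix is simply to restrict $\Theta'$ to $\S^c_\Gamma$ and cite nothing beyond the definition of $\R^c$.
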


Now we prove a crucial proposition to be used later. 
\begin{proposition}\label{prop.u}
If $\varphi\in \Delta$ for all $\Delta\in \S^c_\Gamma$ then $\U\varphi\in \Delta$ for all $\Delta \in \S^c_\Gamma$.   
\end{proposition}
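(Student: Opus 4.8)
The plan is to prove the contrapositive: assuming $\U\varphi \notin \Delta$ for some $\Delta \in \S^c_\Gamma$, I would produce some $\Delta' \in \S^c_\Gamma$ with $\varphi \notin \Delta'$. Since by Proposition~\ref{prop.equiv} membership of $\Kh$-formulas (hence of $\U$-formulas, as $\U\psi := \Kh(\neg\psi,\bot)$) is uniform across $\S^c_\Gamma$, the hypothesis $\U\varphi \notin \Delta$ gives $\U\varphi \notin \Theta$ for \emph{every} $\Theta \in \S^c_\Gamma$; equivalently, by maximal consistency and the $\mathbb{S}5$-behaviour of $\U$ (axioms \AxTrU, \AxTransKU, \AxEucKU\ and their derived forms \AxTransU, \AxEucU), $\neg\U\varphi \in \Gamma$ and also $\U\neg\U\varphi \in \Gamma$. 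The goal is then to build a maximal consistent set $\Delta'$ containing $\neg\varphi$ and agreeing with $\Gamma$ on $\Kh$-formulas, i.e.\ $\Delta' \in \S^c_\Gamma$.

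The key step is a Lindenbaum-style construction: I claim the set $\Gamma|_{\Kh} \cup \Gamma|_{\neg\Kh} \cup \{\neg\varphi\}$ is $\SKh$-consistent. Suppose not. Then finitely many conjuncts yield $\vdash (\chi_1 \land \dots \land \chi_m) \to \varphi$, where each $\chi_i$ is a $\Kh$-formula or a negated $\Kh$-formula in $\Gamma$. Applying \NECU\ gives $\vdash \U((\chi_1\land\dots\land\chi_m)\to\varphi)$, and using \DISTU\ together with \CONJ\ I can distribute to get $\vdash (\U\chi_1 \land \dots \land \U\chi_m) \to \U\varphi$. Now here is the crucial point: each $\chi_i$ is either $\Kh(\psi,\phi)$ or $\neg\Kh(\psi,\phi)$, so by \AxTransKU\ respectively \AxEucKU\ we have $\vdash \chi_i \to \U\chi_i$. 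Hence $\vdash (\chi_1\land\dots\land\chi_m)\to\U\varphi$, and since all $\chi_i \in \Gamma$ we conclude $\U\varphi \in \Gamma$, contradicting $\neg\U\varphi \in \Gamma$. So the set is consistent; extend it by Lindenbaum's lemma to a maximal consistent $\Delta'$. By construction $\Delta'|_{\Kh} \supseteq \Gamma|_{\Kh}$, and since $\Delta'$ is maximal consistent any $\Kh(\psi,\phi) \notin \Gamma$ has $\neg\Kh(\psi,\phi) \in \Gamma|_{\neg\Kh} \subseteq \Delta'$, so $\Delta'|_{\Kh} = \Gamma|_{\Kh}$, giving $\Delta' \in \S^c_\Gamma$; and $\neg\varphi \in \Delta'$ means $\varphi \notin \Delta'$, as required.

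The main obstacle is getting the introspective collapse right: the argument hinges on the fact that in $\SKh$ one has $\vdash \chi \to \U\chi$ for every formula $\chi$ that is a $\Kh$-formula or the negation of one, which is exactly what \AxTransKU\ and \AxEucKU\ deliver, and this is why the Lindenbaum set must be seeded with $\Gamma|_{\Kh} \cup \Gamma|_{\neg\Kh}$ rather than all of $\Gamma$ (an arbitrary propositional atom of $\Gamma$ need not be $\U$-entailed). A minor point to check carefully is the bookkeeping with \CONJ\ and \DISTU\ when distributing $\U$ over the finite conjunction — this is routine normal-modal-logic reasoning, already available via the replacement rule, so I would not belabour it.
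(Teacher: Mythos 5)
Your proof is correct and is essentially the paper's own argument, merely presented contrapositively: the heart of both is the observation that if $\Gamma|_{\Kh}\cup\Gamma|_{\neg\Kh}\cup\{\neg\varphi\}$ were inconsistent, then \NECU, \DISTU, \CONJ\ and the introspection axioms \AxTransKU/\AxEucKU\ force $\U\varphi\in\Gamma$, combined with the Lindenbaum extension within $\S^c_\Gamma$. The paper runs this directly (assuming $\varphi$ holds in every $\Delta$ and concluding $\U\varphi\in\Gamma$, then spreading it by Proposition~\ref{prop.equiv}), while you run it in reverse, but the key lemma and axioms used are the same.
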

\begin{proof}
Suppose $\varphi\in \Delta$ for all $\Delta\in \S^c_\Gamma$, then by the definition of $\S^c_\Gamma$, $\neg\varphi$ is not consistent with $\Gamma|_{\Kh}\cup\Gamma|_{\neg\Kh}$, for otherwise $\Gamma|_{\Kh}\cup\Gamma|_{\neg\Kh}\cup\{\neg \phi\}$ can be extended into a maximal consistent set in $\S^c_\Gamma$ due to a standard Lindenbaum-like argument. Thus there are $\Kh(\psi_1, \varphi_1)$, \dots, $\Kh(\psi_k, \varphi_k) \in \Gamma|_{\Kh}$ and $\neg \Kh(\psi'_1, \varphi'_1)$, \dots, $\neg\Kh(\psi'_l, \varphi'_l) \in \Gamma|_{\neg \Kh}$ such that 
$$\vdash \bigwedge_{1\leq i\leq k}\Kh(\psi_i, \varphi_i)\land \bigwedge_{1\leq j\leq l}\neg \Kh(\psi'_j, \varphi'_j)\to \varphi.$$
By $\NECU$,  $$\vdash \U(\bigwedge_{1\leq i\leq k}\Kh(\psi_i, \varphi_i)\land \bigwedge_{1\leq j\leq l}\neg \Kh(\psi'_j, \varphi'_j)\to \varphi).$$ By $\DISTU$  we have: $$\vdash\U(\bigwedge_{1\leq i\leq k}\Kh(\psi_i, \varphi_i)\land \bigwedge_{1\leq j\leq l}\neg \Kh(\psi'_j, \varphi'_j)) \to \U\varphi.$$   Since $\Kh(\psi_1, \varphi_1)$, \dots, $\Kh(\psi_k, \varphi_k) \in \Gamma|_{\Kh}$, we have $\U\Kh(\psi_1, \varphi_1)$, \dots, $\U\Kh(\psi_k, \varphi_k) \in \Gamma$ due to $\AxTransU$ and the fact that $\Gamma$ is a maximal consistent set. Similarly, we have $\U\neg \Kh(\psi'_1, \varphi'_1)$, \dots, $\U\neg\Kh(\psi'_j, \varphi'_j) \in \Gamma$ due to $\AxEucU$. Now due to a slight generalization of \CONJ, we have: $$\U(\bigwedge_{1\leq i\leq k}\Kh(\psi_i, \varphi_i)\land \bigwedge_{1\leq j\leq l}\neg \Kh(\psi'_j, \varphi'_j))\in \Gamma.$$  Now it is immediate that $\U\varphi\in\Gamma$. Due to Proposition~\ref{prop.equiv}, $\U\varphi\in\Delta$ for all $\Delta\in \S^c_\Gamma.$
\end{proof}
Now we are ready to establish another key proposition for the truth lemma. 
\begin{proposition}\label{prop.imp}
Suppose that there are $\psi',\phi' \in \LKh$ such that for each $\psi$-state $\Delta\in \S^c_\Gamma$ we have $\Delta\lrel{\lr{\psi', \varphi'}}\Theta$ for some $\Theta \in \S^c_\Gamma$, then $\U(\psi\to\psi')\in \Delta$ for all $\Delta\in \S^c_\Gamma.$  
\end{proposition}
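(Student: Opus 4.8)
The plan is to reduce the whole statement to Proposition~\ref{prop.u}: I will first show that $\psi\to\psi'$ belongs to \emph{every} $\Delta\in\S^c_\Gamma$, and then invoke Proposition~\ref{prop.u} with $\varphi:=\psi\to\psi'$ to conclude $\U(\psi\to\psi')\in\Delta$ for all $\Delta\in\S^c_\Gamma$. So the only real work is to establish that $\psi\to\psi'$ is ``globally true'' across the states of the canonical model.

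The first step is pure bookkeeping with the canonical relation. By hypothesis, for every $\psi$-state $\Delta\in\S^c_\Gamma$ there is some $\Theta\in\S^c_\Gamma$ with $\Delta\lrel{\lr{\psi',\varphi'}}\Theta$. But by the very definition of $\lrel{\lr{\psi',\varphi'}}_c$, the existence of such an edge already forces $\psi'\in\Delta$ (together with $\Kh(\psi',\varphi')\in\Gamma$ and $\varphi'\in\Theta$, neither of which I shall need). Hence every $\psi$-state of $\S^c_\Gamma$ is simultaneously a $\psi'$-state.

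The second step converts this into membership of the implication in every state, using that each $\Delta\in\S^c_\Gamma$ is a maximal consistent set. Fix an arbitrary $\Delta\in\S^c_\Gamma$: either $\psi\notin\Delta$, so $\neg\psi\in\Delta$ and thus $\psi\to\psi'\in\Delta$ by propositional reasoning; or $\psi\in\Delta$, i.e. $\Delta$ is a $\psi$-state, so by the previous step $\psi'\in\Delta$ and again $\psi\to\psi'\in\Delta$. (If $\S^c_\Gamma$ happens to contain no $\psi$-state, the hypothesis is vacuous but the first alternative always applies, so the argument still goes through.) This yields $\psi\to\psi'\in\Delta$ for all $\Delta\in\S^c_\Gamma$, and Proposition~\ref{prop.u}, applied to $\varphi:=\psi\to\psi'$, then finishes the proof.

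I do not expect a genuine obstacle here: the only nontrivial ingredient is Proposition~\ref{prop.u} itself, which has already been established and which bears the burden of passing from a formula true at every state to the same formula prefixed by $\U$. The two points worth keeping an eye on are the degenerate ``no $\psi$-state'' case flagged above, and the fact that $\psi'\in\Delta$ must be read off directly from the definition of the canonical edge $\lrel{\lr{\psi',\varphi'}}_c$ rather than from any property of $\Kh$.
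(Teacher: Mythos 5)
Your proof is correct and follows essentially the same route as the paper's: read $\psi'\in\Delta$ off the definition of the canonical relation, case-split on $\psi\in\Delta$ using maximal consistency to get $\psi\to\psi'$ in every state, then invoke Proposition~\ref{prop.u}. The remarks on the vacuous ``no $\psi$-state'' case are a harmless extra precaution.
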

\begin{proof}
Suppose that every $\psi$-state has an outgoing $\lr{\psi', \varphi'}$-transition,  then by the definition of $\R^c$, $\psi'$ is in all the $\psi$-states. For each $\Delta$, either $\psi\not\in \Delta$ or $\psi\in\Delta$ thus $\psi'\in \Delta$. Now by the fact that $\Delta$ is maximally consistent it is not hard to show $\psi\to\psi'\in\Delta$ in both cases.  By Proposition~\ref{prop.u}, $\U(\psi\to\psi') \in \Delta$ for all $\Delta\in\S^c_\Gamma.$
\end{proof}
Now we are ready to prove the truth lemma. 
\begin{lemma}[Truth lemma]
For any $\varphi\in\Gamma: \M_\Gamma^c, \Delta\vDash\varphi \iff \varphi\in\Delta$
\end{lemma}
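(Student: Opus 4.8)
The plan is to prove the truth lemma by induction on the structure of $\varphi$. The Boolean cases ($\top$, $p$, $\neg$, $\land$) are routine, using the definition of $\V^c$ and maximal consistency of $\Delta$. The only real work is the case $\varphi = \Kh(\psi,\chi)$, and there we must argue both directions.

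For the direction $\Kh(\psi,\chi)\in\Delta \implies \M^c_\Gamma,\Delta\vDash\Kh(\psi,\chi)$: by Proposition~\ref{prop.equiv}, $\Kh(\psi,\chi)\in\Gamma$, so $\lr{\psi,\chi}\in\Act_\Gamma$ is a legal action label in the canonical model. I would propose the one-step plan $\sigma = \lr{\psi,\chi}$. I need to check that $\sigma$ is strongly executable at every $\psi$-state $\Delta'$ and that every $\sigma$-successor of such a $\Delta'$ satisfies $\chi$. Strong executability for a length-one sequence just requires that each $\psi$-state $\Delta'$ has at least one $\lr{\psi,\chi}$-successor; such a successor exists because we can extend $\Gamma|_{\Kh}\cup\{\chi\}$ (together with $\Gamma|_{\neg\Kh}$, to stay in $\S^c_\Gamma$) to a maximal consistent set — here I must check consistency of $\{\chi\}$ with $\Gamma|_{\Kh}\cup\Gamma|_{\neg\Kh}$, which should follow since $\Kh(\psi,\chi)\in\Gamma$ and $\Kh(\chi,\chi)$ is derivable (\TRI) together with \COMPKh, or more directly since an inconsistency would give $\vdash\Kh(\psi,\chi)\land\neg\Kh(\psi,\chi)$-type contradictions via \AxTransKU/\AxEucKU; I'd check the precise derivation. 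Then, by the definition of $\lrel{\lr{\psi,\chi}}_c$, any successor $\Theta$ has $\chi\in\Theta$, and the induction hypothesis applied to $\chi$ gives $\M^c_\Gamma,\Theta\vDash\chi$.

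For the harder direction $\M^c_\Gamma,\Delta\vDash\Kh(\psi,\chi) \implies \Kh(\psi,\chi)\in\Delta$: by semantics there is a sequence $\sigma = \lr{\psi_1,\chi_1}\cdots\lr{\psi_n,\chi_n} \in \Act_\Gamma^*$ that is strongly executable at every state satisfying $\psi$ (in the model sense) and forces $\chi$ along all its runs. By the induction hypothesis, the states satisfying $\psi$ are exactly the $\psi$-states of $\S^c_\Gamma$, and similarly for $\chi$. The core of the argument is a chaining/induction on the length of $\sigma$. Strong executability at a $\psi$-state $\Delta'$ means $\Delta'$ has a $\lr{\psi_1,\chi_1}$-successor, so by the definition of $\R^c$ we get $\psi_1\in\Delta'$, hence by Proposition~\ref{prop.imp} $\U(\psi\to\psi_1)\in\Gamma$; moreover $\Kh(\psi_1,\chi_1)\in\Gamma$. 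Iterating: by Proposition~\ref{prop.reachall} every $\chi_1$-state is reachable, and since $\sigma$ stays strongly executable past the first step, every $\chi_1$-state has a $\lr{\psi_2,\chi_2}$-successor, giving $\U(\chi_1\to\psi_2)\in\Gamma$ and $\Kh(\psi_2,\chi_2)\in\Gamma$; and so on, finally $\U(\chi_{n-1}\to\psi_n)\in\Gamma$ and $\Kh(\psi_n,\chi_n)\in\Gamma$, with the last step forcing $\chi$, i.e.\ $\U(\chi_n\to\chi)\in\Gamma$ (using Proposition~\ref{prop.u} applied to the fact that $\chi$ holds in all $\chi_n$-states). Now I combine these syntactically: from $\Kh(\psi_i,\chi_i)$ and the $\U$-implications, repeated use of \WSKh\ and \COMPKh\ yields $\vdash \Kh(\psi,\chi_1)\to\cdots$, ultimately $\Kh(\psi,\chi)$, and since all the ingredients are in $\Gamma$ (a maximal consistent set), $\Kh(\psi,\chi)\in\Gamma$; by Proposition~\ref{prop.equiv}, $\Kh(\psi,\chi)\in\Delta$. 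One subtlety I must handle is the degenerate case $\sigma = \epsilon$, where strong executability is vacuous and "all $\epsilon$-successors of $\psi$-states satisfy $\chi$" means every $\psi$-state is a $\chi$-state; then Proposition~\ref{prop.u} (via an implication) plus \EMP\ gives $\Kh(\psi,\chi)\in\Gamma$ directly.

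The main obstacle I expect is the bookkeeping in the chaining argument: correctly propagating "the tail of $\sigma$ is strongly executable at the states reached so far" through the canonical model, and turning each local transition existence into the right $\U(\cdot\to\cdot)$ fact via Propositions~\ref{prop.imp} and~\ref{prop.u}, so that the final syntactic assembly via \COMPKh\ and \WSKh\ goes through cleanly. In particular I need Proposition~\ref{prop.reachall} to ensure that "reached along $\sigma$" in the canonical model really means "every maximal consistent set containing the relevant $\chi_i$", so that strong executability of the tail transfers to all the right states; this is where the global, non-local nature of $\Kh$ and the uniformity forced by Proposition~\ref{prop.equiv} do the heavy lifting.
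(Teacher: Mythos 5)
Your overall architecture is the paper's: the same canonical-model propositions (Propositions~\ref{prop.equiv}, \ref{prop.reachall}, \ref{prop.imp}, \ref{prop.u}) drive the same two arguments, a one-step witness $\lr{\psi,\chi}$ for membership-to-truth and a chaining induction along $\sigma$ assembled by \WSKh\ and \COMPKh\ for truth-to-membership. But there is a genuine gap in your membership-to-truth direction. You need every $\psi$-state to have a $\lr{\psi,\chi}$-successor, i.e.\ you need a $\chi$-state in $\S^c_\Gamma$, and you claim the required consistency of $\chi$ with $\Gamma|_{\Kh}\cup\Gamma|_{\neg\Kh}$ ``should follow since $\Kh(\psi,\chi)\in\Gamma$'' via \TRI\ and \COMPKh\ or via \AxTransKU/\AxEucKU. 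As stated this is false: $\Kh(\bot,\bot)\in\Gamma$ by \COND, yet $\bot$ is consistent with nothing, so $\Kh(\psi,\chi)\in\Gamma$ alone never yields a $\chi$-state. The argument only works with a case split on whether a $\psi$-state exists. If none exists, the truth condition is vacuous (any plan, e.g.\ $\epsilon$, witnesses it). If one exists, you must show a $\chi$-state exists by contradiction, exactly as the paper does: were there no $\chi$-state, $\neg\chi$ would lie in every member of $\S^c_\Gamma$, so $\U\neg\chi\in\Gamma$ by Proposition~\ref{prop.u}, i.e.\ $\Kh(\chi,\bot)\in\Gamma$; with $\Kh(\psi,\chi)$ and \COMPKh\ this gives $\Kh(\psi,\bot)=\U\neg\psi\in\Gamma$, and \AxTrU\ (via Proposition~\ref{prop.equiv}) then excludes any $\psi$-state, contradicting the assumption. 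No purely syntactic derivation from $\Kh(\psi,\chi)$ can replace this; the existence of a $\psi$-state is essential input.

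A related, smaller omission sits in your truth-to-membership chaining: the step ``every $\chi_1$-state has a $\lr{\psi_2,\chi_2}$-successor'' is justified by ``every $\chi_1$-state is reached via $\sigma_1$ from some $\psi$-state'', which presupposes that at least one $\psi$-state exists. If there is no $\psi$-state, Proposition~\ref{prop.reachall} gives you nothing, the intermediate $\U(\chi_i\to\psi_{i+1})$ facts need not be in $\Gamma$, and the chain breaks. The paper dispatches this case before the induction: no $\psi$-state gives $\U\neg\psi\in\Gamma$ by Proposition~\ref{prop.u}, i.e.\ $\Kh(\psi,\bot)\in\Gamma$, and then \NECU, \EMP\ and \WSKh\ yield $\Kh(\psi,\chi)\in\Gamma$ directly. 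Both gaps are fixable, but both require the same missing move: treat the degenerate ``no $\psi$-state'' situation separately and, in the non-degenerate situation, use Proposition~\ref{prop.u} to convert global falsity into a $\U$-formula of $\Gamma$ before reasoning syntactically.
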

\begin{proof}Boolean cases are trivial and we only focus on the case of $\Kh(\psi, \varphi)$:. 

\noindent $\implies:$ If $\M^c_\Gamma, \Delta\vDash \Kh(\psi, \varphi)$, then according to the semantics,  there exists a (possibly empty) sequence  $\sigma\in{\Act_\Gamma}^*$  such that for every  $ \Delta'\vDash \psi$: $\sigma$ is strongly executable at $\Delta'$ and if $\Delta'\rel{\sigma}\Delta''$ then $\M^c_\Gamma, \Delta''\vDash\phi$. Due to the construction of $\R^c$, there are $\Kh(\psi_1, \varphi_1)$, \dots, $\Kh(\psi_k, \varphi_k)$ in $\Gamma$ such that $\sigma=\lr{\psi_1, \varphi_1}\dots \lr{\psi_k, \varphi_k}$ for some $k\geq 0.$ If there is no $\psi$-state, then by IH, $\neg\psi\in \Theta$ for all $\Theta\in\S^c_\Gamma$. By Proposition~\ref{prop.u}, $\U\neg\psi\in\Delta$, i.e., $\Kh(\psi, \bot)\in\Delta$. Since $\bot\to\phi$ and $\psi\to \psi$ are tautologies, by $\NECU$, $\U(\bot\to \phi)$ and $\U(\psi\to \psi)$ are also in $\Delta$. Then by \SUB\ and \WSKh\ $\Kh(\psi, \phi)\in \Delta.$  In the following we suppose that there exists some $\psi$-state and call this assumption ($\circ$). There are two cases to be considered: 

\begin{itemize}
\item Suppose that $k=0$, i.e, $\sigma=\epsilon$: By the semantics, we have for any $\Theta\in \S^c_\Gamma:  \Theta\vDash\psi\to \varphi$, i.e., $\Theta\nvDash \psi$ or $\Theta\vDash\varphi$.  By IH, $\psi\not\in\Theta$ or $\phi\in\Theta$. Since $\Theta$ is maximally consistent, $\psi\to\varphi\in \Theta$ for all $\Theta\in \S^c_\Gamma$. By Proposition~\ref{prop.u}, $\U(\psi\to\varphi)\in\Theta$ for all $\Theta\in \S^c_\Gamma$. By \SUB\ and \EMP,  we have $\Kh(\psi, \phi)\in\Theta$ for all $\Theta\in\S^c_\Gamma$, in particular, $\Kh(\psi, \phi)\in\Delta.$
\item Suppose $k>0$, recall that $\sigma_m$ is the initial segment of $\sigma$ up to $\lr{\psi_m, \varphi_m}$.  We first prove the following claim ($\star$): 

\quotation{ \noindent For any $m\in [1, k]$: (1) $\Kh(\psi, \varphi_m)\in\Gamma $, and  (2) each $\varphi_m$-state is  reached via $\sigma_m$ from some $\psi$-state.}
\medskip

\begin{itemize}
\item $m=1:$ Due to the semantics of $\Kh$, each $\psi$-state has an outgoing $\lr{\psi_1, \varphi_1}$-transition. By Proposition~\ref{prop.reachall}, each $\psi$-state is connected with all the $\varphi_1$ states by $\lr{\psi_1, \varphi_1}$-transitions, which proves (2). By Proposition~\ref{prop.imp}, $\U(\psi\to\psi_1)\in \Gamma$. By the definition of $\R^c$ it is clear that $\Kh(\psi_1, \varphi_1)\in\Gamma$. Now by \SUB\ and  \WSKh, $\Kh(\psi, \varphi_1)\in \Gamma$.  
\item Suppose that the claim holds for $m=n-1$ then we prove that it holds for $m=n$ as well. By IH for the above claim ($\star$), we have (i) $\Kh(\psi,\varphi_{n-1})\in\Gamma$ and (ii) all the $\varphi_{n-1}$-states are reached from some $\psi$-state by $\sigma_{n-1}$. Since $\sigma$ is a witness of the truth of $\Kh(\psi, \phi)$, $\sigma$ is strongly executable on each $\psi$-state. Now due to (ii) and the strong executability of $\sigma$, we have (iii): each $\varphi_{n-1}$-state has some $\lr{\psi_n, \phi_n}$-successor. By Proposition~\ref{prop.imp} we have $\U(\varphi_{n-1}\to \psi_n)\in\Gamma$. By the definition of $\R^c$, $\Kh(\psi_n, \varphi_n)\in \Gamma$, thus by \WSKh\ we have $\Kh(\phi_{n-1}, \phi_n)\in\Gamma$. Due to (i) and \COMPKh, $\Kh(\psi, \phi_n)\in\Gamma.$  Now for (2) of the claim: By (iii) and the definition of Proposition~\ref{prop.reachall}, each $\varphi_n$-state is reached from some $\phi_{n-1}$-state via $\lr{\psi_{n}, \phi_n}$. Thus based on (ii) again, we have  each $\varphi_n$-state is reached from some $\psi$-state. 
\end{itemize}
Now Claim $(\star)$ is proved.  Let $m=k$, we have ($1_k$) $\Kh(\psi, \phi_k)\in \Gamma$ and ($2_k$) each $\varphi_k$-state is reached via $\sigma_k=\sigma$ from some $\psi$-state (under the assumption $(\circ)$). Now since $\sigma$ witnesses the truth of $\Kh(\psi, \phi)$, $\M^c_\Gamma, \Delta'\vDash\phi$ for every $\varphi_k$-state $\Delta'$. By IH of the main structural induction over formulas, $\phi\in \Delta'$ for every $\Delta'$ such that $\phi_k\in \Delta'$. Thus it is not hard to see that $\phi_k\to\phi$ is in every state of $\S^c_\Gamma$, for otherwise there is a state $\Delta'$ such that $\phi_k, \neg \phi\in\Delta'$.  By Proposition~\ref{prop.u}, $\U(\phi_k\to\phi)\in\Gamma$.  Thus $\Kh(\psi, \phi)\in\Gamma$ due to ($1_k$),  \COMPKh\ and \SUB. Therefore $\Kh(\psi,\phi)\in\Delta$ due to Proposition~\ref{prop.equiv}. 
\end{itemize}
This completes the proof of $\Kh(\psi,\phi)\in\Delta$ if $\M^c_\Gamma, \Delta\vDash\Kh(\psi, \phi)$. 

\medskip

\noindent Now for the other way around:\\
 $\Longleftarrow:$ Suppose  that $\Kh(\psi, \varphi)\in\Delta$, i.e., $\Kh(\psi, \phi)$ is in all the states of $\M^c_\Gamma$ by Proposition~\ref{prop.equiv}, we need to show  that $\M^c_\Gamma, \Delta\vDash \Kh(\psi, \varphi)$. There are three cases to be considered. 
\begin{itemize}
\item There is no $\Theta$ such that $\psi\in\Theta$. By IH, there is no $\Theta$ such that $\M^c_\Gamma,  \Theta\vDash \psi$ then $\M^c_\Gamma, \Delta\vDash \Kh(\psi, \varphi)$ trivially holds (by letting $\sigma=\epsilon$ as the witness). 
\item There are $\Theta, \Theta'$ such that $\psi\in\Theta$ and $\varphi\in \Theta'$. Then by IH, we have  $\M^c_\Gamma, \Theta\vDash \psi$ and $\M^c_\Gamma, \Theta' \vDash \varphi$ for such $\Theta$ and $\Theta'$. Then by the construction of $\R^c$ and IH again we know that $\lr{\psi, \varphi}\in\Act_\Gamma$ is strongly executable, and it will take you to states where $\phi$ is true  from any state where $\psi$ is true.   
\item There is some  $\Theta$ such that $\psi\in\Theta$ but no $\Theta$ such that  $\varphi\in \Theta$. Then it is clear that $\neg\varphi\in\Theta$ for all $\Theta\in\S^c_\Gamma$. By Proposition~\ref{prop.u}, $\U\neg\varphi\in\Theta$ for all $\Theta\in\S^c_\Gamma$. Now we have $\Kh(\varphi,\bot)$ and $\Kh(\psi,\varphi)\in\Theta$ thus by $\COMPKh$ $\Kh(\psi, \bot)\in\Theta$ namely $\U\neg \psi\in \Theta$. By $\AxTrU$, $\neg\psi\in\Theta$ for all the $\Theta \in\S^c_\Gamma$. However, this is contradictory to the assumption that  $\psi\in \Theta $ for some $\Theta\in\S^c_\Gamma.$

\end{itemize}

\end{proof}
Now due to a standard Lindenbaum-like argument, each $\SKh$-consistent set of formulas can be extended to a maximal consistent set $\Gamma$. Due to the truth lemma, $\M^c_\Gamma, \Gamma\vDash\Gamma.$ The completeness of $\SKh$ follows immediately. 
\begin{theorem}
$\SKh$ is sound and strongly complete w.r.t. the class of all models. 
\end{theorem}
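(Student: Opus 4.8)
The plan is to split the statement into soundness and strong completeness and treat each in turn; essentially all of the real work has already been carried out above, so this last step is an assembly argument.

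For soundness I would simply verify that every axiom is valid and every rule preserves validity. The schemata $\DISTU$, $\COMPKh$, $\EMP$, $\AxTrU$, $\AxTransKU$, $\AxEucKU$ are precisely items 1--6 of Proposition~\ref{prop.sound} once we unfold $\U\varphi := \Kh(\neg\varphi,\bot)$, and $\TAUT$ is immediate. For the rules: $\MP$ preserves validity trivially; $\NECU$ preserves validity because $\U$ is the universal modality, so if $\varphi$ holds in every state of every model then so does $\U\varphi$; and $\SUB$ preserves validity by a routine induction on formula structure, using that the clause for $\Kh(\psi,\varphi)$ depends on $\psi,\varphi$ only through the sets of states where they hold. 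Hence every $\SKh$-theorem is valid.

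For strong completeness I would argue contrapositively. Suppose $\Sigma\nvdash\varphi$; then $\Sigma\cup\{\neg\varphi\}$ is $\SKh$-consistent, and by a standard Lindenbaum construction it extends to a maximal consistent set $\Gamma$. Form the canonical model $\M^c_\Gamma$ of the preceding definition; it is a genuine ability map (an arbitrary labelled transition system, with no frame conditions imposed), and $\Gamma$ itself is one of its states, so $\S^c_\Gamma$ is non-empty. By the Truth Lemma, $\M^c_\Gamma,\Gamma\vDash\chi$ for every $\chi\in\Gamma$; in particular $\M^c_\Gamma,\Gamma\vDash\Sigma$ while $\M^c_\Gamma,\Gamma\nvDash\varphi$, so $\Sigma\nvDash\varphi$. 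Contraposing yields strong completeness with respect to the class of all models.

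The genuinely hard part is already behind us: it is the Truth Lemma, and within it the $\Rightarrow$ direction for $\Kh(\psi,\varphi)$, where one must turn a semantic witnessing plan $\sigma=\lr{\psi_1,\varphi_1}\cdots\lr{\psi_k,\varphi_k}$ into a syntactic derivation of $\Kh(\psi,\varphi)\in\Gamma$. That conversion rests on the inductive claim $(\star)$ --- that each prefix $\sigma_m$ links every $\psi$-state to every $\varphi_m$-state and that $\Kh(\psi,\varphi_m)\in\Gamma$ --- which in turn leans on Propositions~\ref{prop.u}, \ref{prop.reachall} and~\ref{prop.imp} together with $\WSKh$ and $\COMPKh$; the handling of the degenerate cases (no $\psi$-state, or $\sigma=\epsilon$) uses $\EMP$ and $\AxTrU$. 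Once the Truth Lemma is granted, the theorem is the two-line corollary sketched above, so I would not expect any further obstacle at this stage.
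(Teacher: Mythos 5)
Your proposal is correct and follows essentially the same route as the paper: soundness from Proposition~\ref{prop.sound} plus validity preservation of the rules, and strong completeness by Lindenbaum extension of a consistent set to $\Gamma$, the canonical model $\M^c_\Gamma$, and the Truth Lemma, whose $\Kh$-case you rightly identify as the only substantive ingredient. Your explicit treatment of $\SUB$ and the contrapositive formulation are just slightly more detailed versions of what the paper leaves implicit.
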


\section{Conclusions and future work}
In this paper, we propose and study a modal logic of goal-direct ``knowing how''. The highlights of our framework are summarized below with connections to our earlier ideas on non-standard epistemic logics: 
\begin{itemize}
\item The ``knowing how'' construction is treated as a whole similar to our works on ``knowing whether'' and ``knowing what'' \cite{WangF13,FWvD15}. We would like to keep the language neat. 
\item Semantically, ``knowing how'' is treated as a special conditional:  \textit{being able} to guarantee a goal given a precondition, partly inspired by the conditionalization in \cite{WangF14}. 
\item The \textit{ability} involved is further interpreted as having a plan that never fails to achieve the goal under the precondition, inspired by the work on conformant planning  \cite{YLW15} where we uses the epistemic PDL language to encode the planning problem.
\item The semantics is based on labelled transition systems representing the agent's knowledge of his own abilities, inspired by the framework experimented in \cite{Wang15}. 
\item Compared to the standard semantic schema of knowledge-that: true in \textit{all} indistinguishable alternatives, our work has a more existential flavour: knowing how as having at least \textit{one} good plan. Our modal operator is not local to the indistinguishable alternatives but it is  about all the possible states even when they are distinguishable from the current world. Thus a cook can still be said to know how to cook a certain dish even if he knows that the ingredients are not available right now. 
\end{itemize} 
There are a lot more to explore. We conjecture the logic is decidable and leave the model-theoretical issues to the full version of this paper. Moreover, it is a natural extension to introduce the standard knowing-that operator $\K$ into the language and correspondingly add a set $\E\subseteq \S$ in the model to capture the agent's \textit{local} epistemic alternatives. Then we can define the local version of ``knowing how'' $\Kh \phi$ as $\K\psi\land \Kh(\psi, \phi)$ for some $\psi$. Other obvious next steps include probabilistic and multi-agent versions of $\Kh$. It also makes good sense to consider group notions of ``knowing how'' which may bring it closer to the framework of ATEL where a group of agents may achieve a lot more together (cf.  \cite{KandA15}). More generally, we may consider program-based ``knowing how'' where conditional plans and iterated plans are allowed, which can be used to \textit{maintain} a goal. It is also interesting to add the dynamic operators to the picture, i.e., the public announcements $[\phi]$. In particular,  it is interesting to see how new knowledge-how is obtained by learning new knowledge-that e.g., $\Kh(p, q) \to  [p](\U p\land \Kh(\top, q))$ may be a desired valid formula.\footnote{Note that $\LKh$ may not have the enough pre-encoding power for announcements in itself, similar to the case of PALC discussed in \cite{LCC}.  In particular, $[\chi]\Kh(\psi, \phi)\lra \Kh([\chi]\psi, [\chi]\phi)$ may not be valid due to the lack of control in the syntax for the intermediate stages of the execution of a plan.} 

There are also interesting philosophical questions related to our formal theory. For example,  a new kind of logical omniscience may occur: if there is indeed a good plan to achieve $\phi$ according to the agent's abilities then he knows how to achieve $\phi$. To the taste of philosophers, maybe an empty plan is not acceptable to witness knowledge-how, e.g., people would not say I know how to digest (by doing nothing). We can define a stronger modality $\Khp (\psi,\varphi)$ as $\Kh(\psi,\varphi)\land \neg \U(\psi\to\varphi)$ to rule out such cases.\footnote{The distinction between $\Kh$ and $\Khp$ is similar to the distinction between STIT and deliberative STIT.}  Note that although $\U$ is definable by $\Kh$ in our setting, it does not have the philosophical implication that knowledge-that is actually a subspecies of knowledge-how, as strong anti-intellectulism would argue. 
 Nevertheless, our axioms do tell us something about the interactions between ``knowing how'' and ``knowing that'',  e.g.,  $\WSKh$ says some (global) knowledge-that may let us know better how to reach our goal.

\bibliographystyle{splncs}
\bibliography{questions}

\begin{thebibliography}{10}

\bibitem{Wright51}
Von~Wright, G.H.:
\newblock An Essay in Modal Logic.
\newblock North Holland, Amsterdam (1951)

\bibitem{Hintikka:kab}
Hintikka, J.:
\newblock Knowledge and Belief: An Introduction to the Logic of the Two
  Notions.
\newblock Cornell University Press, Ithaca N.Y. (1962)

\bibitem{WangRJ11}
Wang, R.J.:
\newblock Timed Modal Epistemic Logic.
\newblock PhD thesis, City University of New York (2011)

\bibitem{HBEL}
van Ditmarsch, H., Halpern, J.Y., van~der Hoek, W., Kooi, B., eds.:
\newblock Handbook of Epistemic Logic.
\newblock College Publications (2015)

\bibitem{Egre08}
Egr\'e, P.:
\newblock Question-embedding and factivity.
\newblock Grazer Philosophische Studien \textbf{77}(1) (2008)  85--125

\bibitem{Harrah02}
Harrah, D.:
\newblock The logic of questions.
\newblock In Gabbay, D., ed.: Handbook of Philosophical Logic. Volume~8.
\newblock Springer (2002)

\bibitem{Hintikka07}
Hintikka, J.:
\newblock Socratic Epistemology: Explorations of Knowledge-Seeking by
  Questioning.
\newblock Cambridge University Press (2007)

\bibitem{Hintikka89}
Hintikka, J., Hintikka, M.:
\newblock Reasoning about knowledge in philosophy: The paradigm of epistemic
  logic.
\newblock In: The Logic of Epistemology and the Epistemology of Logic. Volume
  200 of Synthese Library.
\newblock Springer Netherlands (1989)  17--35

\bibitem{Fitting98}
Fitting, M., Mendelsohn, R.L.:
\newblock First-order modal logic.
\newblock Synthese Library. Springer (1998)

\bibitem{RAK}
Fagin, R., Halpern, J., Moses, Y., Vardi, M.:
\newblock Reasoning about knowledge.
\newblock MIT Press (1995)

\bibitem{CiardelliGR13}
Ciardelli, I., Groenendijk, J., Roelofsen, F.:
\newblock Inquisitive semantics: {A} new notion of meaning.
\newblock Language and Linguistics Compass \textbf{7}(9) (2013)  459--476

\bibitem{CiardelliR15}
Ciardelli, I., Roelofsen, F.:
\newblock Inquisitive dynamic epistemic logic.
\newblock Synthese \textbf{192}(6) (2015)  1643--1687

\bibitem{Plaza89:lopc}
Plaza, J.A.:
\newblock Logics of public communications.
\newblock In Emrich, M.L., Pfeifer, M.S., Hadzikadic, M., Ras, Z.W., eds.:
  Proceedings of the 4th International Symposium on Methodologies for
  Intelligent Systems. (1989)  201--216

\bibitem{Hart:1996}
Hart, S., Heifetz, A., Samet, D.:
\newblock Knowing whether, knowing that, and the cardinality of state spaces.
\newblock Journal of Economic Theory \textbf{70}(1) (1996)  249--256

\bibitem{wiebeetal:2003}
van~der Hoek, W., Lomuscio, A.:
\newblock Ignore at your peril - towards a logic for ignorance.
\newblock In: Proceedings of AAMAS 03, ACM (2003)  1148--1149

\bibitem{WangF13}
Wang, Y., Fan, J.:
\newblock Knowing that, knowing what, and public communication: Public
  announcement logic with kv operators.
\newblock In: Proceedings of IJCAI 13. (2013)

\bibitem{WangF14}
Wang, Y., Fan, J.:
\newblock Conditionally knowing what.
\newblock In: Advances in Modal Logic Vol. 10. (2014)  569--587

\bibitem{FanWD14}
Fan, J., Wang, Y., van Ditmarsch, H.:
\newblock Almost necessary.
\newblock In: Advances in Modal Logic Vol. 10. (2014)  178--196

\bibitem{FWvD15}
Fan, J., Wang, Y., van Ditmarsch, H.:
\newblock Contingency and knowing whether.
\newblock The Review of Symbolic Logic \textbf{8} (2015)  75--107

\bibitem{Xiong14}
Xiong, S.:
\newblock Decidability of $\mathbf{ELKv^r}$.
\newblock Bachelaor thesis (2014)

\bibitem{stanley2001knowing}
Stanley, J., Williamson, T.:
\newblock Knowing how.
\newblock The Journal of Philosophy (2001)  411--444

\bibitem{Ryle}
Ryle, G.:
\newblock Knowing how and knowing that: The presidential address.
\newblock Proceedings of the Aristotelian Society \textbf{46} (1946)  1--16

\bibitem{Fantl08}
Fantl, J.:
\newblock Knowing-how and knowing-that.
\newblock Philosophy Compass \textbf{3}(3) (2008)  451--470

\bibitem{Mccarthy69}
McCarthy, J., Hayes, P.J.:
\newblock Some philosophical problems from the standpoint of artificial
  intelligence.
\newblock In: Machine Intelligence, Edinburgh University Press (1969)  463--502

\bibitem{McCarthy79}
McCarthy, J.:
\newblock First-{Order} theories of individual concepts and propositions.
\newblock Machine Intelligence \textbf{9.} (1979)  129--147

\bibitem{Moore85}
Moore, R.C.:
\newblock A formal theory of knowledge and action.
\newblock In Hobbs, J.R., Moore, R.C., eds.: Formal Theories of the Commonsense
  World.
\newblock Ablex Publishing Corporation (1985)

\bibitem{Gochet13}
Gochet, P.:
\newblock An open problem in the logic of knowing how.
\newblock In Hintikka, J., ed.: Open Problems in Epistemology.
\newblock The Philosophical Society of Finland (2013)

\bibitem{KandA15}
{\AA}gotnes, T., Goranko, V., Jamroga, W., Wooldridge, M.:
\newblock Knowledge and ability.
\newblock In van Ditmarsch, H., Halpern, J., van~der Hoek, W., Kooi, B., eds.:
  Handbook of Epistemic Logic.
\newblock College Publications (2015)  543--589

\bibitem{Vendler67}
Vendler, Z.:
\newblock Linguistics in Philosophy.
\newblock Cornell University Press (1967)

\bibitem{LauWang}
Lau, T., Wang, Y.:
\newblock Formalizing ``knowing how''.
\newblock Unpublished manuscript (2015)

\bibitem{Lauthesis}
Lau, T.:
\newblock Formalizing ``knowing how''.
\newblock Master's thesis, Peking University (2015) In Chinese.

\bibitem{Morgenstern86}
Morgenstern, L.:
\newblock A first order theory of planning, knowledge, and action.
\newblock In: Proceedings of the 1986 Conference on Theoretical Aspects of
  Reasoning About Knowledge, San Francisco, CA, USA, Morgan Kaufmann Publishers
  Inc. (1986)  99--114

\bibitem{HerzigT06}
Herzig, A., Troquard, N.:
\newblock Knowing how to play: uniform choices in logics of agency.
\newblock In: Proceedings of AAMAS 06. (2006)  209--216

\bibitem{Broersen08}
Broersen, J.M.:
\newblock A logical analysis of the interaction between 'obligation-to-do' and
  'knowingly doing'.
\newblock In: Proceedings of DEON 08. (2008)  140--154

\bibitem{JamrogaH04}
Jamroga, W., van~der Hoek, W.:
\newblock Agents that know how to play.
\newblock Fundamenta Informaticae \textbf{63}(2-3) (2004)  185--219

\bibitem{Carr79}
Carr, D.:
\newblock The logic of knowing how and ability.
\newblock Mind \textbf{88}(351) (1979)  394--409

\bibitem{noe2005against}
No{\"e}, A.:
\newblock Against intellectualism.
\newblock Analysis (2005)  278--290

\bibitem{TillioHS14}
Tillio, A.D., Halpern, J.Y., Samet, D.:
\newblock Conditional belief types.
\newblock Games and Economic Behavior \textbf{87} (2014)  253--268

\bibitem{SW98}
Smith, D.E., Weld, D.S.:
\newblock Conformant graphplan.
\newblock In: AAAI 98. (1998)  889--896

\bibitem{WL12}
Wang, Y., Li, Y.:
\newblock Not all those who wander are lost: Dynamic epistemic reasoning in
  navigation.
\newblock In: Advances in Modal Logic Vol.\ 9. (2012)  559--580

\bibitem{YLW15}
Yu, Q., Li, Y., Wang, Y.:
\newblock A dynamic epistemic framework for conformant planning.
\newblock In: Proceedings of TARK'15, ACM (2015)  249--259

\bibitem{Brown88}
Brown, M.A.:
\newblock On the logic of ability.
\newblock Journal of Philosophical Logic \textbf{17}(1) (1988)  1--26

\bibitem{stanley2011know}
Stanley, J.:
\newblock Know how.
\newblock Oxford University Press (2011)

\bibitem{Wang15}
Wang, Y.:
\newblock Representing imperfect information of procedures with hyper models.
\newblock In: Proceedings of ICLA. (2015)  218--231

\bibitem{KrachtW97}
Kracht, M., Wolter, F.:
\newblock Simulation and transfer results in modal logic - {A} survey.
\newblock Studia Logica \textbf{59}(1) (1997)  149--177

\bibitem{GorankoP92}
Goranko, V., Passy, S.:
\newblock Using the universal modality: Gains and questions.
\newblock Journal of Logic and Computation \textbf{2}(1) (1992)  5--30

\bibitem{LCC}
van Benthem, J., van Eijck, J., Kooi, B.:
\newblock Logics of communication and change.
\newblock Information and Computation \textbf{204}(11) (2006)  1620--1662

\end{thebibliography}
\end{document}